\documentclass[10pt]{article} 
\usepackage[accepted]{tmlr}


\usepackage{amsmath,amsfonts,bm}









\def\eqref#1{equation~\ref{#1}}









\def\1{\bm{1}}










\DeclareMathAlphabet{\mathsfit}{\encodingdefault}{\sfdefault}{m}{sl}
\SetMathAlphabet{\mathsfit}{bold}{\encodingdefault}{\sfdefault}{bx}{n}











\newcommand{\E}{\mathbb{E}}

\newcommand{\R}{\mathbb{R}}



\DeclareMathOperator*{\argmin}{arg\,min}

\usepackage{hyperref}
\usepackage{url}
\usepackage{graphicx}
\usepackage{amsthm}
\usepackage{comment}
\theoremstyle{plain}
\newtheorem{theorem}{Theorem}[section]
\newtheorem{proposition}[theorem]{Proposition}
\newtheorem{lemma}[theorem]{Lemma}

\theoremstyle{definition}
\newtheorem{definition}[theorem]{Definition}
\usepackage{pifont}
\usepackage{booktabs}
\usepackage{enumitem}
\usepackage{dsfont}
\usepackage{bbm}
\usepackage{xcolor}
\usepackage{amsfonts}
\usepackage{graphicx}
\usepackage{multirow}

\let\AND\relax
\usepackage{algorithm}
\usepackage{algorithmic}

\title{Controlled Model Debiasing through Minimal and Interpretable Updates}


\author{\name Federico Di Gennaro$^{1,2}$ \email federico.digennaro@epfl.ch \\
      \addr EPFL, Lausanne, Switzerland \\\\
      \AND
      \name Thibault Laugel$^1$ \email thibault.laugel@axa.com \\
      \addr AXA, Paris, France\\
      TRAIL, LIP6, Sorbonne Université, Paris, France \\\\
      \AND
      \name Vincent Grari \\
      \addr AXA, Paris, France\\
      TRAIL, LIP6, Sorbonne Université, Paris, France\\\\
      \AND
      \name Marcin Detyniecki \\
      \addr AXA, Paris, France \\
       TRAIL, LIP6, Sorbonne Université, Paris, France \\
      Polish Academy of Science, IBS PAN, Warsaw, Poland
      }




\begin{document}

\maketitle

\footnotetext[1]{Equal contribution.}
\footnotetext[2]{Work carried out during an internship at AXA.}

\begin{abstract}
Traditional approaches to learning fair machine learning models often require rebuilding models from scratch, typically without considering potentially existing models. In a context where models need to be retrained frequently,
this can lead to inconsistent model updates, as well as redundant and costly validation testing. To address this limitation, we introduce the notion of \emph{controlled model debiasing}, a novel supervised learning task relying on two desiderata: that the differences between the new fair model and the existing one should be (i) minimal and (ii) interpretable.
After providing theoretical guarantees to this new problem, we introduce a novel algorithm for algorithmic fairness, COMMOD, that is both model-agnostic and does not require the sensitive attribute at test time. In addition, our algorithm is explicitly designed to enforce minimal and interpretable changes between biased and debiased predictions in a binary classification task—a property that, while highly desirable in high-stakes applications, is rarely prioritized as an explicit objective in fairness literature. 
Our approach combines a concept-based architecture and adversarial learning and we demonstrate through empirical results that it achieves comparable performance to state-of-the-art debiasing methods while performing minimal and interpretable prediction changes. Code to reproduce the experiments is available on the following repository: \url{https://github.com/axa-rev-research/controlled-model-debiasing}

\end{abstract}

\section{Introduction}
\label{sec:introduction}

The increasing adoption of machine learning models in high-stakes domains—such as criminal justice~\citep{kleinberg2016inherent} and credit lending~\citep{bruckner2018promise}—has raised significant concerns about the potential biases that these models may reproduce and amplify, particularly against historically marginalized groups. 
Recent public discourse, along with regulatory developments such as the European AI Act (2024/1689), has further underscored the need for adapting AI systems to ensure fairness and trustworthiness \citep{bringas2022fairness}. Consequently, many of the machine learning models deployed by organizations are, or
may soon be, subject to these emerging regulatory requirements.
Yet, such organizations frequently invest significant resources (e.g. time and money) in validating their models with the assistance of domain experts before deploying them at scale (e.g. \citet{mata2021validation} for dam safety monitoring and \citet{tsopra2021framework} for precision medicine), to ensure their performance and trustworthiness.
Hence, as new regulatory constraints emerge, the ability to make these models comply with new constraints while minimizing the need for revalidation has thus become critically important.

The field of algorithmic fairness in a classification framework has experienced rapid growth in recent years, with numerous bias mitigation strategies proposed~\citep{romei2014multidisciplinary,mehrabi2021survey}. These approaches can be broadly categorized into three types: \emph{pre-processing} (e.g.,\citet{belrose2024leace}), \emph{in-processing} (e.g.,\citet{zhang2018mitigating}), and \emph{post-processing} (e.g., \citet{kamiran2010discrimination}), based on the stage of the machine learning pipeline at which fairness is enforced. While the two former categories do not account at all for any pre-existing biased model being available for the task, post-processing approaches aim to impose fairness by directly modifying the predictions of a biased classifier: this naturally imposes some kind of consistency between the new, fair model and the old, biased one.
However, these methods are generally deemed to achieve lower performance, and the changes performed are generally not traceable. 

Motivated by these considerations, in this paper we consider a new paradigm, proposing to frame algorithmic fairness as a model update task. The goal is thus to enforce fairness through small and understandable updates to a pretrained model, presumably biased, in order to facilitate model monitoring. For this purpose, we introduce the notion of \emph{Controlled Model Debiasing}, based on two intuitive desiderata: \textbf{(i)} changes between the new model and the existing one should be \textbf{minimal}, and \textbf{(ii)} these changes should be \textbf{understandable}. We stress the fact that requiring minimal updates ensures that we change as few existing \emph{decisions} as possible, which preserves model stability, reduces the operational burden of re‑validation, and maintains user trust in settings where consistency is critical. At the same time, enforcing interpretable updates means each model change can be clearly explained to domain experts and stakeholders—facilitating faster audit cycles, clearer accountability, and easier troubleshooting when fairness constraints interact with real‑world business rules. 

Our formulation combines assumptions from both \emph{post-processing} and \emph{in-processing} approaches, proposing to learn a new fair model by leveraging a previously trained biased one.
After providing theoretical guarantees on the solution and feasibility of this new proposed problem, we introduce COMMOD (COncept-based Minimal MOdel Debiasing), our method to address it. Leveraging the fields of concept-based interpretability~\citep{alvarez2018towards,koh2020concept} and fair adversarial learning~\citep{zhang2018mitigating,grari2021fairness}, COMMOD is a model-agnostic, interpretable debiasing method that aims to improve fairness in an interpretable way, while minimizing prediction changes relative to the original model. Through experiments, we demonstrate that our method achieves comparable fairness and accuracy performance to existing algorithmic fairness approaches, while requiring fewer prediction changes. Additionally, we show that COMMOD enables more meaningful and easier-to-understand prediction changes, enhancing its utility in practice. To summarize, our contributions are as follows:
\begin{itemize}
    \item We introduce a new notion of Controlled Model Debiasing, which aims to account for a previously trained model by performing minimal and interpretable updates to mitigate bias (Section~\ref{sec:controlled-model-debiasing}). To the best of our knowledge, we are the first to address interpretability directly in the updating process. 
    \item We provide two theoretical guarantees for the proposed problem: the formulation of the Bayes-Optimal Classifier in a fairness-aware setting and the feasibility of the problem in the general setting (Section~\ref{sec:theoretical results}).
    \item We introduce a new method, COMMOD, to address the new optimization problem in a model-agnostic, fairness-unaware setting (Section~\ref{sec:COMMOD}). We validate its performance through experiments on classical fairness datasets, showcasing its debiasing efficacy and ability to perform fewer and more interpretable changes (Section~\ref{sec:experiments}).
\end{itemize}

\section{Background and notation}
\label{sec:background}

Let $\mathcal{X}\subseteq\R^d$ be an input space consisting of $d$ features and $\mathcal{Y}$ an output space. In a traditional algorithmic fairness framework in supervised learning, we want an algorithm that outputs $\hat{Y}\in\mathcal{Y}$ such that $\hat{Y}$ is unbiased from a sensitive variable $S$, only available at training time.
For the sake of simplicity, we focus in this paper on a binary classification problem where $\mathcal{Y}=\{0,1\}$ and for which also the sensitive attribute $S$ takes values on a binary sensitive space $\mathcal{S}=\{0,1\}$.

\subsection{Group fairness}
\label{sec:groupfairness}
Over the years, several notions of fairness have been proposed in the literature to define whether or not a variable $\hat{Y}$ is unbiased from a sensitive variable $S$ \citep{dwork2012fairness,hardt2016equality,jiang2020wasserstein}. In this paper, we focus on two of the most used ones: \textit{Demographic Parity} \citep{DPpaper} and \textit{Equalizing Odds} \citep{hardt2016equality}.

\subsubsection{Demographic Parity}
A classifier satisfies \textit{Demographic Parity} (DP) if the prediction $\hat{Y}$ is independent from sensitive attribute $S$. In the case of a binary classification, this is equivalent to $\mathbb{P}(\hat{Y} = 1\mid S = 0) = \mathbb{P}(\hat{Y} = 1\mid S = 1).$
Hence, Demographic Parity can be measured using the \emph{P-rule}:
\begin{equation}
    \textit{P-Rule} = \min \left( \frac{\mathbb{P}(\hat{Y}=1\mid S=1)}{\mathbb{P}(\hat{Y}=1\mid S=0)}, \ \frac{\mathbb{P}(\hat{Y}=1\mid S=0)}{\mathbb{P}(\hat{Y}=1\mid S=1)} \right).
    \nonumber
\end{equation}

\subsubsection{Equalizing Odds}
A classifier satisfies \textit{Equalizing Odds} (EO) if the prediction $\hat{Y}$ is conditionally independent of the sensitive attribute $S$ given the actual outcome $Y$. In the case of a binary classification problem, this is equivalent to $\mathbb{P}(\hat{Y} = 1\mid Y=y,\ S = 0) = \mathbb{P}(\hat{Y} = 1\mid Y=y,\ S = 1), $
for all $y\in\{0,1\}$.
Hence, Equalizing Odds can be measured using the \emph{Disparate Mistreatment} (DM) \citep{zafar2017fairness}:
$$\Delta_{TPR} = |TPR_{S=1} - TPR_{S=0}|, \ \ \Delta_{FPR} = |FPR_{S=1} - FPR_{S=0}|,$$
where $TPR_{S=s} = \mathbb{P}(\hat{Y} = 1|Y = 1,\ S = s), \ FPR_{S=s} = \mathbb{P}(\hat{Y}= 1|Y = 0,\ S = s),$ and $s\in\{0, 1\}$. 
In the rest of the paper, we use $DM=\Delta_{TPR}+\Delta_{FPR}$.

\subsection{Mitigation strategies}
Bias mitigation algorithms 
are generally grouped into three different categories, based on when the debiasing process is carried on in the machine learning pipeline~\citep{romei2014multidisciplinary}. In particular, \textit{pre-processing} methods (e.g. \citet{zemel2013learning}) modify the training data such that they are unbiased with respect to $S$.
\textit{In-processing} methods (e.g. \citet{zhang2018mitigating}) aim to train a new, fair, classifier by integrating some fairness constraints in the optimization objective. Finally, \textit{post-processing} methods aim to achieve fairness by adjusting the predictions $\hat{Y}^f$ (or the predicted probabilities) of an already existing model $f\colon\mathcal{X}\rightarrow\mathcal{Y}$. As such, a notion of predictive similarity between these two sets of predictions often being optimized, either directly~\citep{jiang2020wasserstein,nguyen2021fairness,alghamdi2022beyond} or indirectly through heuristics~\citep{kamiran2018exploiting}. However, rather than an assumed end-goal, this objective generally remains a proxy for the true goal of optimizing accuracy, as the true labels $Y$ are generally assumed to be unavailable at inference time in the post-processing setting.
Furthermore, these approaches generally suffer from several limitations, such as not being model-agnostic~\citep{calders2010three,kamiran2010discrimination,du2021fairness}, or requiring access to the sensitive attribute at test time~\citep{hardt2016equality,pleiss2017fairness}, which greatly hurt their practical use. On this topic, we provide a review of existing post-processing approaches and their limitations in Table~\ref{tab:comparison} in Appendix.

\section{Controlled model debiasing}
\label{sec:controlled-model-debiasing}

Let us now consider a context where a model $f:\mathcal{X}\rightarrow[0,1]$ that outputs each input $x\in\mathcal{X}$ to a predicted probability $\mathbb{P}(Y=1|X=x)$. Suppose now this model needs to be updated into a fairer model $g$ (with respect to a specific fairness definition discussed in Section \ref{sec:groupfairness}) while trying to maintain the accuracy of $f$.
As discussed in Section~\ref{sec:introduction}, blindly training $g$ can be harmful for several reasons.
First, \citet{krco2023mitigating} have shown that some bias mitigation approaches performed needlessly large numbers of prediction changes for similar levels of accuracy and fairness. Yet, a large number of inconsistent decisions may negatively impact users' trust, as shown by~\citet{burgeno2020impact} in the context of weather forecast.
On top of this, beyond statistical testing, models in production may undergo extensive testing by domain experts~\citep{tsopra2021framework}. 
In this context, being able to understand the changes from one model to the other would be crucial to not being forced to undergo all the testing again, in addition to increasing expert trust. Driven by the above-mentioned considerations, we introduce the notion of \textit{Controlled Model Update}, which is based on the following notions of \textit{Interpretable} and \textit{Minimal} updates.

\subsection{Interpretable updates}
To facilitate the adoption of a new model $g$, we propose to generate explanations that describe how the debiasing process modified the old model $f$. Contrary to traditional XAI methods (see~\citet{guidotti2018survey} for a survey), rather than explaining the decisions of the model $g$, our aim here is to generate insights about the differences between $f$ and $g$.
An intuitive solution to this problem would be to first train a fair model and then generating explanations for these differences in a post-hoc manner, akin to~\citet{renard2024understanding}. However, post hoc interpretability methods, in general, are often criticized for their lack of connection with ground-truth data~\citep{rudin2019stop,laugel2019dangers}. Therefore, in order to ensure better trust in the new model, we pursue the direction of self-explainable models~\citep{alvarez2018towards}, generating global explanations for the differences between $g$ and $f$ while learning the predictive task.

\subsection{Minimal updates}
The constraint of ensuring that the new model remains as similar as possible to the initial one can be directly added to the training objective. In particular, to the usual optimization problem of fairness, we added a constraints that control the probability for a prediction $\hat{Y}^f$ to change label after the debiasing process. We emphasize once again that making minimal changes to a biased model is beneficial in cases where the model has already been validated based on user needs. Significant updates to make the model fairer might risk no longer satisfying those needs, requiring once again a costly validation.
We then frame our problem in a traditional algorithmic-fairness way, in which the self-explainable model~$g$ is required to be \textbf{accurate} and \textbf{fair}. On top of these constraints, $g$ is required to be \textbf{similar} to $f$ according to a distance function $\text{dist}_\phi\colon [0,1]\times[0,1]\rightarrow\mathbb{R}_{\geq 0}$, where $\text{dist}_\phi(f,g)=\mathbb{E}_{X\sim\mathcal{D}}[\phi(f(X),g(X))]$.
The combination of these desiderata allows us to understand what changes are required from a biased model to become fairer focusing on minimal adjustments.

Let $\text{fair}(\cdot)$ be the fairness criteria (e.g. \textit{Demographic Parity} or \textit{Equalizing Odds}). Further, let us define the predictions (after threshold) of the biased model $f$ and edited model $g$ as $\hat{Y}^f=\1_\mathrm{f(X)>0.5}$ and $\hat{Y}=\1_\mathrm{g(X)>0.5}$ respectively. We can then formalize our optimization problem through the following \textit{Controlled Model Update} objective:
\begin{equation}
    \begin{aligned}
        \min \quad & Acc(\hat{Y}, Y) \\
        \textrm{s.t.} \quad & \text{fair}(\hat{Y}) \geq \tau \\
        & \text{dist}_\phi(f,g) \leq p\\
    \end{aligned}
    \label{eq:optimization-pb}
\end{equation}
where $\tau\in\R_+$ and $p\in\mathbb{R}$. One could also notice that, by fixing $\phi(f,g)=\lvert\mathbbm{1}_{(f>0.5)}-\mathbbm{1}_{(g>0.5)}\lvert$, the distance becomes $\text{dist}_\phi(f,g)=\mathbb{P}(\hat{Y}^f\neq\hat{Y})$. Through the paper, this is the distance we will refers to in our experiments, but a brief discussion on other possible distances (i.e. different $\phi$) is reported in Section \ref{subsec: penalization term for minimal updates} and Appendix \ref{sec:appendix-calibration}.
Observe that setting $p=0$, i.e. forbidding any prediction change to happen, imposes $g=f$. On the contrary, the more $p$ increases and the more $g$ is free to differ from $f$ and, ideally, to reach higher fairness scores. When $p=1$, Eq.~\ref{eq:optimization-pb} becomes a classical algorithmic fairness learning problem. Notice also that the interpretability is all carried by the self-explainable model $g$ itself, and hence it will be not explicitly appear in the optimization problem of Equation~\ref{eq:optimization-pb}. We will discuss our self-explainable model $g$ in Section \ref{subsec:self-explainable model}.

\section{Theoretical guarantees on the minimal update problem}
\label{sec:theoretical results}

In this section, we present two theoretical contributions to the problem formalized in Eq. \ref{eq:optimization-pb}, where we used as distance $\text{dist}_\phi(f,g)=\mathbb{P}(\hat{Y}^f\neq\hat{Y})$.
First (Section~\ref{subsec: result 1}), we show that it is possible to define the Bayes Optimal Classifier of the problem in the fairness-aware setting. 
Then, in Section~\ref{sec: result2 fairness minimal changes}, we assess the feasibility of the problem in general, giving guarantees on the trade-off between fairness and number of prediction changes. All proofs can be found in Appendix~\ref{sec:Proofs}.

\subsection{Result 1: Bayes-optimal classifier in a fairness-aware setting}
\label{subsec: result 1}
Let $(X,S,Y,\hat{Y}^f)\sim \mathcal{D}_{jnt}$ be a joint distribution on $\mathcal{X}\times\{0,1\}\times\{0,1\}\times\{0,1\}$, where $X$ is the instance, $Y$ the target feature, $S$ the sensitive variable, and $\hat{Y}^f$ the output of the black-box classifier. Let $\mathcal{D}$ be the distribution over $(X,Y)$, $\mathcal{\Bar{D}}$ a suitable distribution (DP or EO) over $(X,S)$ and $\mathcal{D}^*$ a distribution over $(X,\hat{Y}^f)$. We want to show that despite the new constraint on the number of changes, our optimization problem in Eq.~\ref{eq:optimization-pb} still has an analytical solution (i.e. the Bayes-optimal classifier (BOC)) in a fairness aware setting knowing the true distributions above mentioned.
To do so, we leverage the work of \citep{menon2018cost}, who defined the BOC in the traditional fairness setting, relying on a reformulation of the optimization problem using the notion of cost-sensitive risk\footnote{From~\citet{menon2018cost}: The risk of a randomized classifier $g\colon \mathcal{X}\rightarrow(0,\ 1)$ computed on $Y$ is called \textit{cost sensitive risk} for a cost parameter $c\in(0, \ 1)$ and $\pi=\mathbb{P}(Y=1)$ if it can be written as $CS(g,\mathcal{D},c)=\pi\cdot(1-c)\cdot FNR(g;\mathcal{D})+(1-\pi)\cdot c \cdot FPR(g;\mathcal{D}),$
where $FNR(g;\mathcal{D})=\E_{X|Y=1}[1-g(X)]$ and $FPR(g;\mathcal{D})=\E_{X|Y=0}[g(X)].$}. For what concerns the fairness constraint, they show (Lemma 3) that $\textit{P-Rule}(g)\geq \tau$ is equivalent to $CS_{bal}(g, \Bar{\mathcal{D}}, \Bar{c})\geq k$, where $CS_{bal}(g, \Bar{\mathcal{D}}, \Bar{c})=(1-\Bar{c})\cdot FNR(g;\Bar{\mathcal{D}}) + \Bar{c}\cdot FPR(g;\Bar{\mathcal{D}})$
is the \textit{balanced cost-sensitive risk} and $\Bar{c}=1/(1+\tau)$.\\
Following the same idea, we first establish the equivalence between the constraint on the number of changes and a cost-sensitive risk:

\begin{lemma}
\label{lemma: CS view of minimal changes}
    Pick any random classifier $g$. Then, for any $p\in[0,1]$,
    \begin{equation*}
        \mathbb{P}(\hat{Y}^f\neq\hat{Y})\leq p \Leftrightarrow CS_{bal}(g;\mathcal{D}^*,c^*)\leq p,
    \end{equation*}
    with $CS_{bal}(g;\mathcal{D}^*,c^*)=(1-c^*)\cdot FNR(g;\mathcal{D}^*)+c^*\cdot FPR(g;\mathcal{D}^*)\footnote{Notice that we changed the distribution w.r.t. FPR and FNR are computed. Hence, $FNR(g;\mathcal{D}^*)=\mathbb{P}(\hat{Y}=0|\hat{Y}^f=1)$ and $FPR(g;\mathcal{D}^*)=\mathbb{P}(\hat{Y}=1|\hat{Y}^f=0)$.} \text{and }  c^*=\mathbb{P}(\hat{Y}=1).$
\end{lemma}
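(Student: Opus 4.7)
The plan is to derive an exact identity between the disagreement probability and the balanced cost-sensitive risk on $\mathcal{D}^*$, after which the equivalence of the two $\le p$ inequalities is immediate for every $p \in [0,1]$. This mirrors the strategy used by~\citep{menon2018cost} to cast the $P$-rule as a balanced cost-sensitive risk on the $(X,S)$ marginal; here I replay the argument on the $(X,\hat{Y}^f)$ marginal $\mathcal{D}^*$ instead, with $\hat{Y}^f$ playing the role that $Y$ (or $S$) played in those earlier arguments.

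The first step is to decompose the disagreement probability by conditioning on the black-box output $\hat{Y}^f$:
\[
P(\hat{Y}^f \neq \hat{Y}) \;=\; P(\hat{Y}^f=1)\,P(\hat{Y}=0\mid \hat{Y}^f=1) \;+\; P(\hat{Y}^f=0)\,P(\hat{Y}=1\mid \hat{Y}^f=0).
\]
The second step is to recognize the two conditional probabilities as $FNR(g;\mathcal{D}^*)$ and $FPR(g;\mathcal{D}^*)$, which is exactly the footnoted convention (since $\hat{Y}^f$ is treated as the ground-truth label under $\mathcal{D}^*$). The third step is to match the marginal weights $P(\hat{Y}^f=1)$ and $P(\hat{Y}^f=0)$ against the coefficients $(1-c^*)$ and $c^*$ in the definition of $CS_{bal}$; both identifications collapse to the single choice $c^* = P(\hat{Y}^f=0)$, after which substitution yields $P(\hat{Y}^f \neq \hat{Y}) = CS_{bal}(g;\mathcal{D}^*,c^*)$ as an exact equality rather than merely a one-sided bound.

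The proof contains no real analytic difficulty — it is essentially one application of the law of total probability, dressed up in cost-sensitive language. The only point that deserves care is bookkeeping: the FPR and FNR here are taken with respect to the biased predictions $\hat{Y}^f$ rather than the true labels $Y$, so that \emph{positives} and \emph{negatives} refer to the old model's outputs. This reinterpretation is exactly what lets the minimal-change constraint slot into the cost-sensitive framework unchanged, and it is the enabling step if one wants to later combine the accuracy, fairness and minimal-change constraints into a single Bayes-optimal characterization in the fairness-aware setting targeted by Section~\ref{subsec: result 1}.
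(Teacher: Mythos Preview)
Your argument is correct and is essentially the paper's own proof: both condition on $\hat{Y}^f$ via the law of total probability, recognize the two conditional disagreement probabilities as $FNR(g;\mathcal{D}^*)$ and $FPR(g;\mathcal{D}^*)$, and read off the equality with $CS_{bal}$. One small remark: your bookkeeping gives $c^* = P(\hat{Y}^f=0)$, which is indeed the value forced by the stated form $(1-c^*)\cdot FNR + c^*\cdot FPR$, whereas the lemma as printed writes $c^* = P(\hat{Y}=1)$; the paper's appendix proof is itself inconsistent on this coefficient, so your identification is the mathematically correct one and the discrepancy is a typo in the statement, not a flaw in your derivation.
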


Finally, in a similar fashion of the main result in \citep{menon2018cost} about the Bayes-Optimal Classifier, we can define for any cost $c, \ \Bar{c},\ c^*\in[0,1], \ \lambda_{fair},\ \lambda_{ratio} \in \R$ the Lagrangian $R_{CS}(g;\mathcal{D}, \mathcal{\Bar{D}}, \mathcal{D}^*)$ of the optimization problem in Eq.~ \ref{eq:optimization-pb} as:
\begin{equation}
    \begin{aligned}
        R_{CS}(g;\mathcal{D}, \mathcal{\Bar{D}}, \mathcal{D}^*) = CS(g;\mathcal{D}, c) &- \lambda_{fair} CS_{bal}(g;\mathcal{\Bar{D}}, \Bar{c})\\
        &- \lambda_{ratio} CS_{bal}(g;\mathcal{D}^*, c^*).
    \end{aligned}
\end{equation}

\begin{proposition}[Fairness-aware Bayes-optimal classifier]
\label{lemma:BO-DP classifier}
    The Bayes optimal classifier of $R_{CS}$ in a fairness aware setting and knowing the distributions $\mathcal{D}, \mathcal{\Bar{D}}, \mathcal{D}^*$ is given by
    \begin{equation*}
        g_{opt}(x) = \argmin_{g\in [0,1]^\mathcal{X}} R_{CS}(g;\mathcal{D}, \mathcal{\Bar{D}}, \mathcal{D}^*) = \{H_\eta \circ s^*(x) \mid \alpha\in [0,1]\}
    \end{equation*}
    where $$\eta(x) = \mathbb{P}(Y=1\mid X=x), \ \Bar{\eta}(x) = \mathbb{P}(S=1\mid X=x), \ \eta^*(x) = \mathbb{P}(\hat{Y}^f=1\mid X=x),$$  $$s^*(x) = \eta(x) - c -\lambda_{ratio}(\Bar{\eta}(x) - \Bar{c}) -\lambda_{fair}(\eta^*(x)-c^*),$$ and $H_\eta(z) = \1_\mathrm{\{z>0\}} + \alpha\1_\mathrm{\{z=0\}}$ is the modified Heaviside (or step) function for a parameter $\eta\in[0,1]$. 
\end{proposition}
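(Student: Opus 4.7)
The plan is to extend the pointwise optimization argument used by \citet{menon2018cost} for the Bayes-optimal classifier under a fairness constraint, incorporating the additional Lagrangian term coming from the minimal-change constraint via Lemma~\ref{lemma: CS view of minimal changes}. The key observation is that each of the three cost-sensitive terms in $R_{CS}(g;\mathcal{D},\bar{\mathcal{D}},\mathcal{D}^*)$ can be rewritten as an expectation over the marginal of $X$ of a quantity that is \emph{affine in the decision variable} $g(x)$. Once in that form, the infinite-dimensional optimization over $g\in[0,1]^{\mathcal{X}}$ decouples into a one-dimensional minimization of a linear function on $[0,1]$ at each $x$.

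Concretely, the first step is to use the tower property to push each class-conditional expectation back to the common marginal $P_X$. For the accuracy term, $CS(g;\mathcal{D},c)=\mathbb{E}_X\bigl[(1-c)(1-g(X))\eta(X)+c\,g(X)(1-\eta(X))\bigr]$, which is linear in $g(X)$ with slope $c-\eta(X)=-(\eta(X)-c)$. Analogous manipulations yield expressions for $CS_{bal}(g;\bar{\mathcal{D}},\bar{c})$ in terms of $\bar{\eta}(X)$ (normalised by $P(S=s)$) and for $CS_{bal}(g;\mathcal{D}^*,c^*)$ in terms of $\eta^*(X)$ (normalised by $P(\hat{Y}^f=y)$), the latter being exactly the minimal-change constraint rewritten via Lemma~\ref{lemma: CS view of minimal changes}. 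Collecting these, I would write $R_{CS}(g)=\mathbb{E}_X\bigl[a(X)+b(X)\,g(X)\bigr]$, where $a(X)$ absorbs every $g$-independent term and $b(X)$ is an affine combination of $\eta(X)-c$, $\bar{\eta}(X)-\bar{c}$ and $\eta^*(X)-c^*$.

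The prior normalisation constants $P(S=s)$ and $P(\hat{Y}^f=y)$ do not depend on $g$ or $x$, so I would absorb them into the multipliers $\lambda_{fair}$ and $\lambda_{ratio}$ by a harmless reparameterization (these multipliers are free parameters tracing out the Pareto front of solutions of (\ref{eq:optimization-pb})). After this rescaling, $b(X)=-s^*(X)$ for the $s^*$ in the statement. The Bayes-optimal classifier then follows from pointwise minimization: for each $x$, the minimum of $u\mapsto b(x)\,u$ on $[0,1]$ is attained at $u=1$ when $s^*(x)>0$, at $u=0$ when $s^*(x)<0$, and at any $u\in[0,1]$ when $s^*(x)=0$. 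This is exactly the family $\{H_\alpha\circ s^*(x):\alpha\in[0,1]\}$, yielding the claim. Measurability of a minimizer in that family is immediate since one may always choose the $\{0,1\}$-valued representative $\1_{\{s^*(X)>0\}}$, which inherits measurability from $\eta,\bar{\eta},\eta^*$.

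The main obstacle I anticipate is the careful bookkeeping of the three different prior normalisations (over $Y$, $S$, and $\hat{Y}^f$): showing that they can all be consistently reabsorbed into $\lambda_{fair}$ and $\lambda_{ratio}$ so that the pointwise coefficient $b(X)$ reduces \emph{exactly} to $-s^*(X)$, rather than to some proportional but offset version of it. A subsidiary point is being clear that the ``knowing the true distributions'' assumption is what legitimizes working pointwise with the regressors $\eta,\bar{\eta},\eta^*$, since otherwise these would only be accessible through samples.
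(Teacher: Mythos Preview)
Your proposal is correct and follows essentially the same route as the paper's proof: both rewrite each of the three cost-sensitive risks as an expectation over the marginal of $X$ of an affine function of $g(X)$ (the paper does this via its Lemma~\ref{lemma:CS expectation Lemma 9 menon}, which is exactly your tower-property step), collect the resulting coefficient into $-s^*(X)$, and minimize pointwise on $[0,1]$ to obtain the Heaviside family. Your flagged obstacle about absorbing the prior normalisations into $\lambda_{fair}$ and $\lambda_{ratio}$ is handled in the paper in the same way you suggest, namely implicitly by treating the multipliers as free Lagrange parameters; you are simply more explicit about it, and your remark on measurability is an extra nicety the paper omits.
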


Despite its theoretical importance, the BOC presented in Proposition 3.2 remains impractical for computation, as the joint and marginal distributions of the random variables $X,S,Y,\hat{Y}^f$  are typically inaccessible in real-world applications.
Furthermore, the above result only holds within a fairness-aware setting, a context that is uncommon in practical scenarios. For this reason, we introduce in Sec.~\ref{sec:COMMOD} a novel algorithm that optimizes a relaxed form of the optimization problem in Eq. \ref{eq:optimization-pb}, which does not require knowledge of the sensitive attribute at test time. Finally, note that this result can be directly adapted to EO, since the additional constraint we bring does not depend on the fairness criteria, and \citet{menon2018cost} also applies to EO.

\subsection{Result 2: fairness level under a maximum change constraint}
\label{sec: result2 fairness minimal changes}

We now aim to theoretically understand what happens experimentally to the trade-off between fairness score and the similarity between $f$ and $g$ defined in Eq.~\ref{eq:optimization-pb}. For this purpose, we study the impact that $K$ changes to the predictions of $f$ can have on \textit{P-Rule} and on Disparate Mistreatment (DM).

Let us consider a set$\{(X_i,Y_i,S_i)\}_{i=1}^N$, with $S_i\in\{0,1\}$ and $Y_i\in\{0,1\}$. Then, we can define the following quantities
\[
\gamma_{s1}(f) = \bigl|\{\,i : S_i=s,\;f(X_i)=1\}\bigr|,
\quad
\gamma_{s0}(f) = \bigl|\{\,i : S_i=s,\;f(X_i)=0\}\bigr|,
\]
where $|\{set\}|$ stays for the size of such a set. Then, with $C = S_0/S_1$, the empirical P-Rule of $f$ is
\[
\mathrm{P\text{-}Rule}(f)
=\min\!\Bigl(C\cdot\tfrac{\gamma_{11}(f)}{\gamma_{10}(f)},\;\tfrac1C\cdot\tfrac{\gamma_{10}(f)}{\gamma_{11}(f)}\Bigr).
\]
\begin{definition}(Switching point)
    The switching point $K_s$ is the minimum number of changes required to get a $\mathrm{P\text{-}Rule}(g)\approx1$ starting from $\mathrm{P\text{-}Rule}(f)$.
\end{definition}
Informally, the switching point refers to the phenomenon in which, starting from one of the two functional forms in the minimum definition of the $\mathrm{P\text{-}Rule}$, we switch to the reciprocal form definition. With this in mind, we can prove the following proposition:
\begin{proposition}[Extreme‐flip optimality for Demographic Parity]
\label{lemma:fairness-changes}
    Given $K$ changes with $K< K_s$, the maximum $\mathrm{P\text{-}Rule}$ we can get by editing with $K$ flips the predictions of a model $f$ is achieved by either dedicating all the flips to one side of the division $\gamma_{11}(f)$ vs $\gamma_{10}(f)$.
\end{proposition}
Note that, in minimal changes regimes of these paper, the it is reasonable to assume that $K<K_s$.
We believe this result represents a significant milestone in the theoretical characterization of the effects of debiasing a model using DP as a fairness measure. Although this result provides valuable insights into the \textit{dynamics} of the debiasing process, Proposition \ref{lemma:fairness-changes} does not account for the potential impact of these optimal changes on model accuracy. This highlights the possibility that, in the context of algorithmic fairness—where the goal is to balance fairness and accuracy—debiasing algorithms may prioritize alternative changes that minimize adverse effects on accuracy. On top of this, this theoretical result would be algorithmically applicable only in a fairness aware setting due to the fact that the knowledge of proportion $\gamma_{ij}, \ i,j\in\{0,1\}$ is required.

\paragraph{Extension of the result to Equalized Odds}. Further, this result could be also presented for EO and a proof follows more or less the same argument. In fact, we can define
\[
N_{s,y}
\;=\;
\bigl|\{\,i : S_i = s,\;Y_i = y\}\bigr|,
\qquad
s,y \in \{0,1\},
\]
i.e.\ the number of instances in group \(s\) with label \(y\). The core insight is that, when you split $K$ flips between TPR‐adjusting and FPR‐adjusting actions, the resulting DM is a piecewise‐linear function of the allocation. A convex (affine) function on a compact interval always achieves its minimum at one of the endpoints and hence the best you can do is to devote all $K$ flips to whichever single action gives the larger per‐flip reduction in disparity. In particular, we can prove the following:
\begin{proposition}[Optimal allocation for minimizing Disparate Mistreatment]
\label{prop:opt-min-DM}
Let \(f\) be a fixed base classifier and define
\[
\gamma = \max_{s\in\{0,1\}}\frac1{N_{s,1}},
\qquad
\delta = \max_{s\in\{0,1\}}\frac1{N_{s,0}}.
\]
For any integer \(K\ge1\), consider all post‐processed classifiers obtained by flipping exactly \(K\) labels of \(f\).  If \(x\in[0,K]\) denotes the number of flips devoted to reducing the TPR‐gap (and \(K-x\) those for the FPR‐gap), the minimum Disparate Mistreatment over \(x\in\{1,\dots,K\}\) is attained at an endpoint:
\[
x_{\rm opt}=
\begin{cases}
0, & \delta>\gamma,\\
\text{any }x\in[0,K], & \delta = \gamma,\\
K, & \delta<\gamma.
\end{cases}
\]
\end{proposition}
This means that the best strategy is to devote all \(K\) flips to whichever type of flip gives the larger per‐flip reduction.

\section{COMMOD: an algorithm for controlled model debiasing} 
\label{sec:COMMOD}
As previously mentioned, while the BOC of Proposition~\ref{lemma:BO-DP classifier} is theoretically relevant in the Fairness-Aware setting, it lacks practical utility.
In order to solve Problem~\ref{eq:optimization-pb}, we thus propose a new algorithm, called COMMOD, circumventing these limitations and integrating the interpretability objective described in Section~\ref{sec:controlled-model-debiasing}.
COMMOD aims to edit the probabilities scores $f(X)$ of the biased model into scores $g(X)$, whose associated predictions $\hat{Y}=\1_\mathrm{g(X)>0.5}$ are more fair and whose predictive quality is measured through a loss function $\mathcal{L}_Y(g(X), Y)$ (e.g. the binary cross-entropy loss).
The update is done through a multiplicative factor $r(X)$ as follows:
\begin{equation}
\label{eq: g(x)}
    g(X) = \sigma(r(X)f_{\text{logit}}(X)),
\end{equation}
where $\sigma(\cdot)$ is the sigmoid function to ensure $g(X)\in[0,1]$ and $f_{\text{logit}}$ is the logit value of the biased model $f$. 
Besides being intuitive, this formulation enables a more direct modelling of the differences between $f$ and $g$, as ${\hat{Y}\neq\hat{Y}^f \text{ if and only if } r(X) < 0}$. 
We then propose to model the ratio $r_{w_g}$ as a Neural Network with parameters $w_g$ and taking as input $X$.
In the subsections below, we describe how both desiderata of \textbf{similarity} and \textbf{interpretability} are integrated in COMMOD.

\subsection{Penalization term for minimal updates}
\label{subsec: penalization term for minimal updates}
We propose to minimize model changes by adding a penalization term $\mathcal{L}_{\text{ratio}}$. Numerous similarity measures between $f$ and $g$ can be considered, depending on the considered scenario: for instance, existing works in post-processing fairness generally focus on distances between distributions such as the Wasserstein distance~\citep{jiang2020wasserstein} or the KL-divergence~\citep{tifrea2023frapp}.  
Here, we propose to use either a mean squared norm function $\mathcal{L}_{\text{ratio}}(r_{w_g}(X)) = ||r_{w_g}(X) - \mathds{1}||_M^2$ defined by a positive-definite inner product $M$. This term ensure that the score $f(X)$ is not modified unnecessarily, and thus that the calibration of $g$ remains consistent (cf. Figure~\ref{fig:calibration commod appendix} in App.~\ref{sec:appendix-calibration} for a related discussion). For the sake of simplicity, in the experiments of our paper we set $M=I$, recovering the classical Euclidean distance. More generally, by choosing $M = \mathrm{diag}(w_1,\dots,w_d)\succ0$,
one recovers a \emph{weighted} Euclidean norm
$  \big\lVert r_{w_g}(X) - \mathds{1}\big\rVert_M^2
  = \sum_{i=1}^d w_i\,\bigl(r_{w_g}(X)_i - 1\bigr)^2.$
Similarly, setting $M = \Sigma^{-1}$, 
with \(\Sigma\) the (positive-definite) feature covariance matrix, yields the \emph{Mahalanobis} distance
$(r_{w_g}(X) - \mathds{1})^\top \Sigma^{-1}(r_{w_g}(X) - \mathds{1})$.
In likelihood-based settings one can even take \(M\) to be the Fisher-information matrix, providing a local quadratic approximation to the KL divergence.

\subsection{Concept-based Architecture for Interpretable Changes}
\label{subsec:self-explainable model}
In our approach, we prioritize interpretability by utilizing a self-explainable model. 
\textit{Sparse Linear Regression} are generally considered to be interpretable. On top of that, inspired by the growing body of works on \textit{concepts}~\citep{CBM,yuksekgonulpost,fel2023craft,zarlenga2023tabcbm}, we propose to learn $k$ concepts $C:\mathcal{X}\rightarrow\mathbb{R}$ in an unsupervised manner where every concept is a \textbf{sparse} linear combination of the input features. Consistently with previous works, we also posit that these concepts should be both \textbf{diverse}~\citep{alvarez2018towards,fel2023craft,zarlenga2023tabcbm} to be meaningful and the least redundant possible. Hence, the model $g$ is interpretable due to the structure of $r_{w_g}$ that is a neural network without activation functions.
The final architecture $r_{w_g}$ consists of an initial (linear) bottleneck that maps the input features $X$ into $k$ concepts, followed by an output layer that linearly combines these concepts to produce the multiplicative ratio needed for the update. By using linear combinations for both the features-to-concepts mapping and the concepts-to-output mapping, we can directly examine the learned weights to understand the direction (positive or negative) in which each concept—and, by extension, each feature—contributes to the value of the ratio. This level of interpretability was not achievable in previous models like TabCBM~\citep{zarlenga2023tabcbm}. To ensure that the learned concepts are meaningful, we introduce penalization terms for \textbf{diversity} and \textbf{sparsity}, similar to those considered in the existing interpretability literature~\citep{zarlenga2023tabcbm,yuksekgonulpost,khurana2021semantic}. Given our linear architecture, these terms are defined as follows: for sparsity, we use a Lasso penalization term, and for diversity, we define the penalization as $\mathcal{L}_{\text{diversity}} = \sum_{i,j \leq k}{\rho(W^i, W^j)}$, where $\rho$ denotes the cosine distance and $W^i, W^j$ are the network's weights from input features to concepts $i$ and $j$ respectively. The combined penalization term is then given by $\mathcal{L}_{\text{concepts}} = \mathcal{L}_{\text{diversity}} + \mathcal{L}_{\text{sparsity}}$. \\
Beyond ensuring interpretability, using a linear transformation rather than a more complex architecture stems from our observations that it often lead to comparable results along our metrics of interest (see~\ref{sec:appendix self-explainable architecture} in the Appendix) - an observation also made by~\citep{tifrea2023frapp}.

\subsection{Final model}
To enforce fairness, we propose to leverage the technique proposed by~\citet{zhang2018mitigating}, which relies on a dynamic reconstruction of the sensitive attribute from the predictions using an adversarial model $h_{w_h}:\mathcal{Y}\rightarrow \mathcal{S}$ to estimate and mitigate bias. The higher the loss $\mathcal{L}_S(h_{w_h}(r_{w_g}(X)f_{\text{logit}}(X)), S)$ of this adversary is, the more fair the predictions $g_{w_g}(X)$ are.
Furthermore, this allows us to mitigate bias in a fairness-blind setting (i.e.\ without access to the sensitive attribute at test time), thus overcoming the limitation of most post-processing fairness methods (cf Table~\ref{tab:comparison}). 

We then use our rescaling network $r_{w_g}$, implemented as a purely linear mapping without activation functions (cf.\ Section ~\ref{subsec:self-explainable model}); on the other hand, both the adversary $h_{w_h}$ and the predicting network $g_{w_g}$ are standard fully connected networks with non-linear activations. Thus, the relaxed version of Equation~\ref{eq:optimization-pb} for DP can be written as:
\begin{equation}
    \begin{aligned}
        \min_{w_g}  \quad \E [\mathcal{L}_Y(g_{w_g}(X), Y)] \quad \textrm{[1]}\\
        \textrm{s.t} \quad \E[\mathcal{L}_S(h_{w_h}(r_{w_g}(X)f_{\text{logit}}(X)), S)] \geq \epsilon' \quad \textrm{[2]}\\
        \textrm{and} \quad \E[\mathcal{L}_{\text{ratio}}(r_{w_g}(X))] \leq \eta' \quad \textrm{[3]}
    \end{aligned}
    \label{eq:optimisation-r}
\end{equation}
with $\epsilon', \ \eta' > 0$. In practice, we relax Problem~\ref{eq:optimisation-r}, integrating the interpretability constraints on the concepts:  
\begin{equation}
\label{eq: loss training}
\begin{aligned}
    \argmin_{w_g}\max_{w_h} \; &\frac{1}{N} \sum_{i=1}^N \mathcal{L}_{Y}(g_{w_g}(x_i), y_i) \\
    &- \lambda_{\text{fair}}\mathcal{L}_{S}(h_{w_h}(r_{w_g}(x_i)f_{\text{logit}}(x_i)), s_i) \\
    &+ \lambda_{\text{ratio}}\mathcal{L}_{\text{ratio}}(r_{w_g}(x_i)) \\
    &+ \lambda_{\text{concepts}}\mathcal{L}_{\text{concepts}}(r_{w_g}(x_i)),
\end{aligned}
\end{equation}
with $\lambda_{\text{fair}}$, $\lambda_{\text{ratio}}$ and $\lambda_{\text{concepts}}$ three hyperparameters and with \(\mathcal{L}_Y\) and \(\mathcal{L}_S\) binary cross‐entropy losses. Similarly to \citep{zhang2018mitigating} this loss function can easily be adapted to address EO by modifying the adversary $h_{w_h}$ to take as both the true labels $Y$ and the predictions $g_{w_g}(X)$.

\section{Experiments}
\label{sec:experiments}
After describing our experimental setting, we propose in this section to empirically evaluate the two dimensions of the \textit{controlled model update}: the ability to achieve accuracy and fairness with minimal (Section~\ref{sec:n_changes}), and interpretable (Section~\ref{sec:interpretability}) prediction changes.

\subsection{Experimental protocol}

\subsubsection{Datasets}
We experimentally validate two binary classification datasets, commonly used in the fairness literature \citep{hort2024bias}: Law School~\citep{wightman1998lsac} and Compas~\citep{angwin2016machine}. The sensitive attribute for both datasets is \emph{race}.

\subsubsection{Competitors and baselines}
To assess the efficacy of COMMOD, we first use an \textbf{in-processing debiasing method},  AdvDebias~\citep{zhang2018mitigating}, described in Section~\ref{sec:groupfairness}. We use the implementation provided in the AIF360 library~\citep{aif360-oct-2018}. Additionally, as discussed in Section~\ref{sec:n_changes}, \textbf{post-processing debiasing methods} also use a pretrained classifier as input. Yet, most methods are not directly comparable as they are not model-agnostic or require the sensitive attribute a test time. Two exceptions are LPP
~\citep{xian2024optimal} and FRAPPE~\citep{tifrea2023frapp}, which we use as competitors. Furthermore, although not directly comparable, we include the fairness-aware algorithms proposed by~\citet{xian2024optimal} (that we name \emph{Oracle (LPP)}) and ROC~\citep{ROC} (\emph{Oracle (ROC)}) as baselines. As they directly use the sensitive attribute to make predictions, these two algorithms are expected to outperform all the others. 

\subsubsection{Set-up}
After splitting each dataset in $\mathcal{D}_{\text{train}}$ ($70\%$) and $\mathcal{D}_{\text{test}}$ ($30\%$), we train our pretrained classifier $f$ to optimize solely accuracy. In these experiments, we use a Logistic Regression classifier from the scikit-learn library, but any other classifier could be used since COMMOD and the proposed competitors are model-agnostic. We then train COMMOD, the competitors and the baselines on $\mathcal{D}_{\text{train}}$, varying hyperparameter values for each method to achieve a range of fairness and accuracy scores over $\mathcal{D}_{\text{test}}$. For COMMOD, we set a fixed value for the number of concepts~$k$: $2$ for Law School and $5$ for Compas. Further details on implementation are available in Section \ref{sec:appendix implementatio details} of the Appendix.

\subsection{Experiment 1: achieving fairness through minimal changes}
\label{sec:n_changes}

\begin{figure}[t]
  \centering
  \includegraphics[width=0.48\linewidth]
  {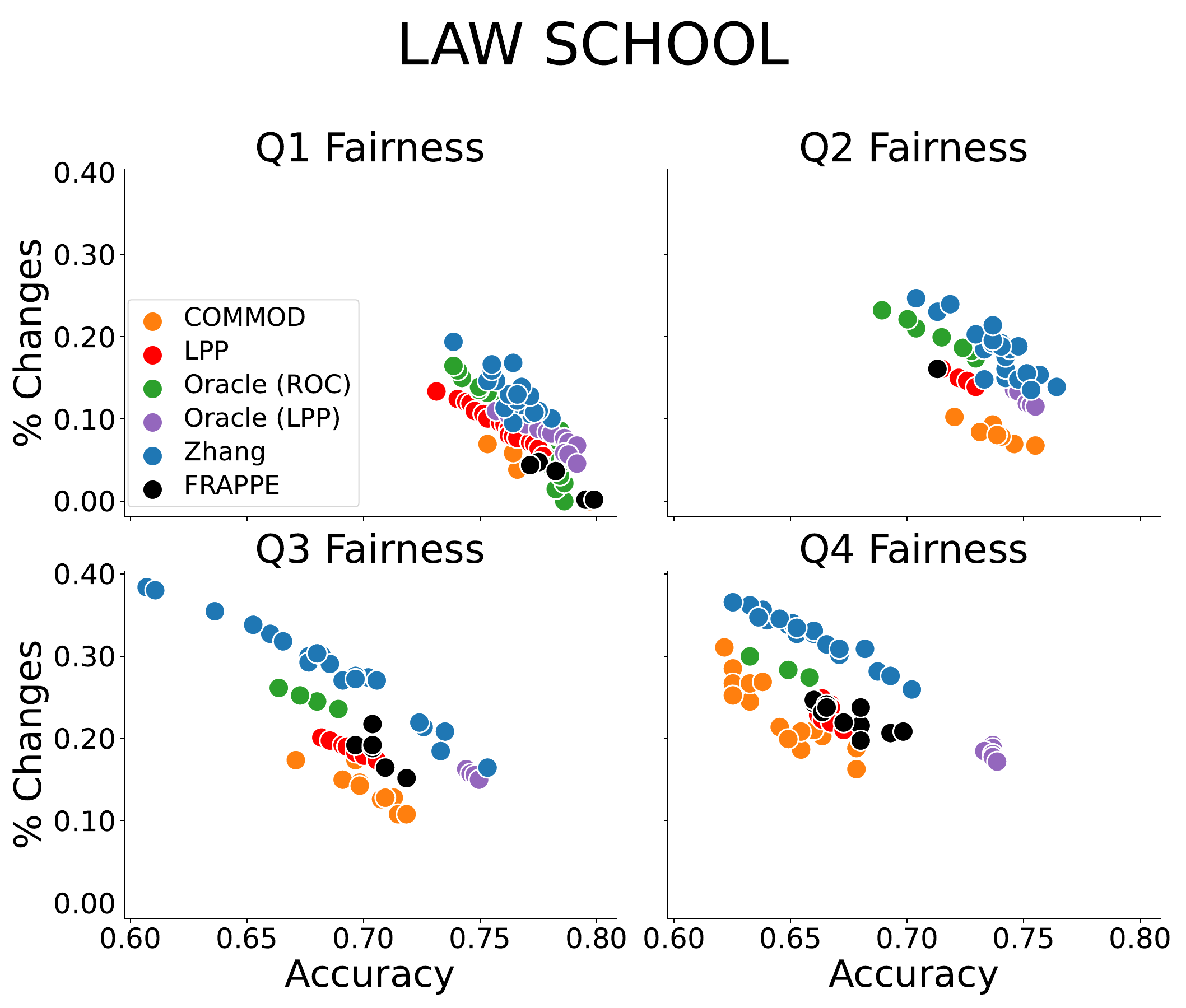}
  \includegraphics[width=0.48\linewidth]{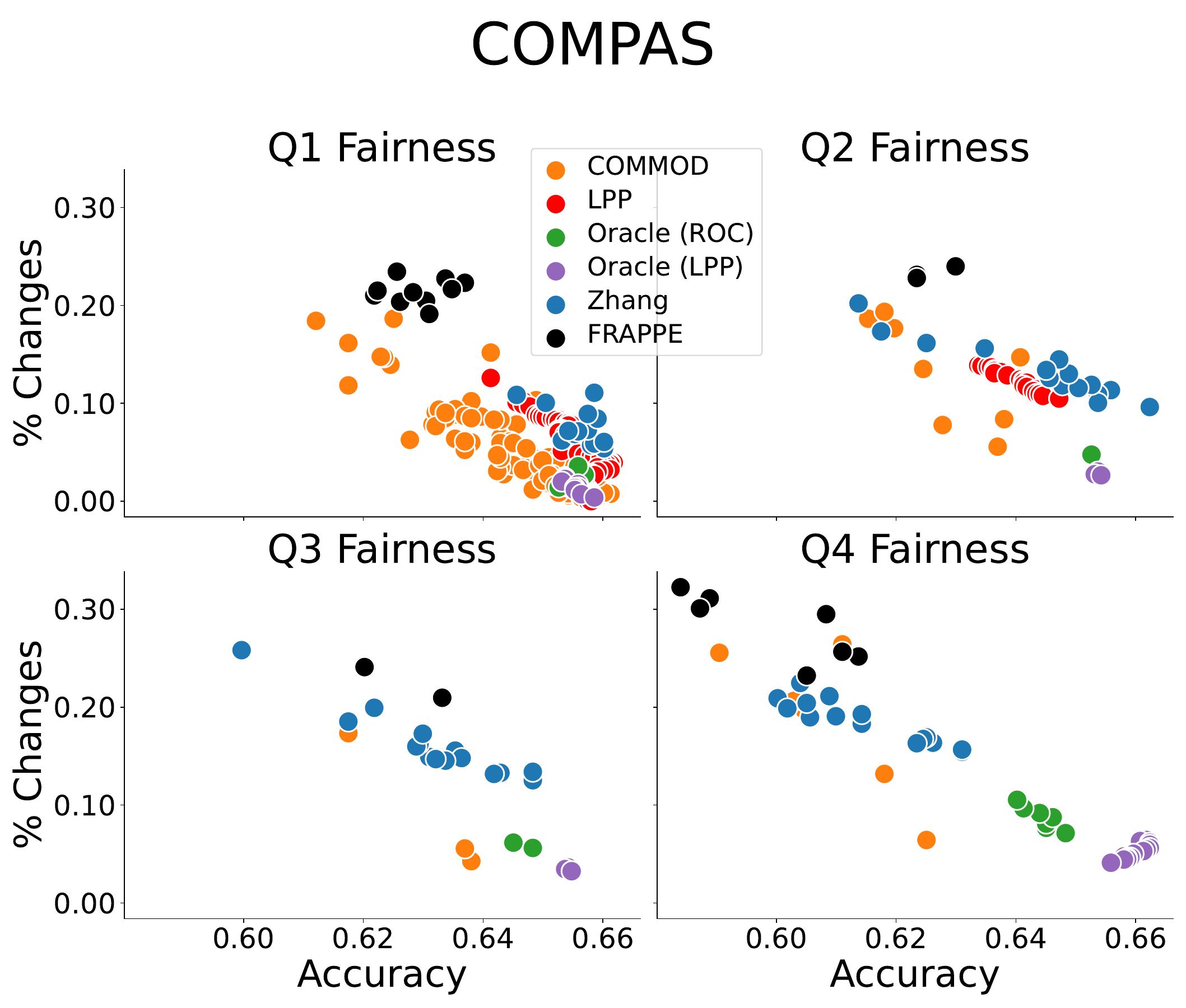}

  \includegraphics[width=0.48\linewidth]{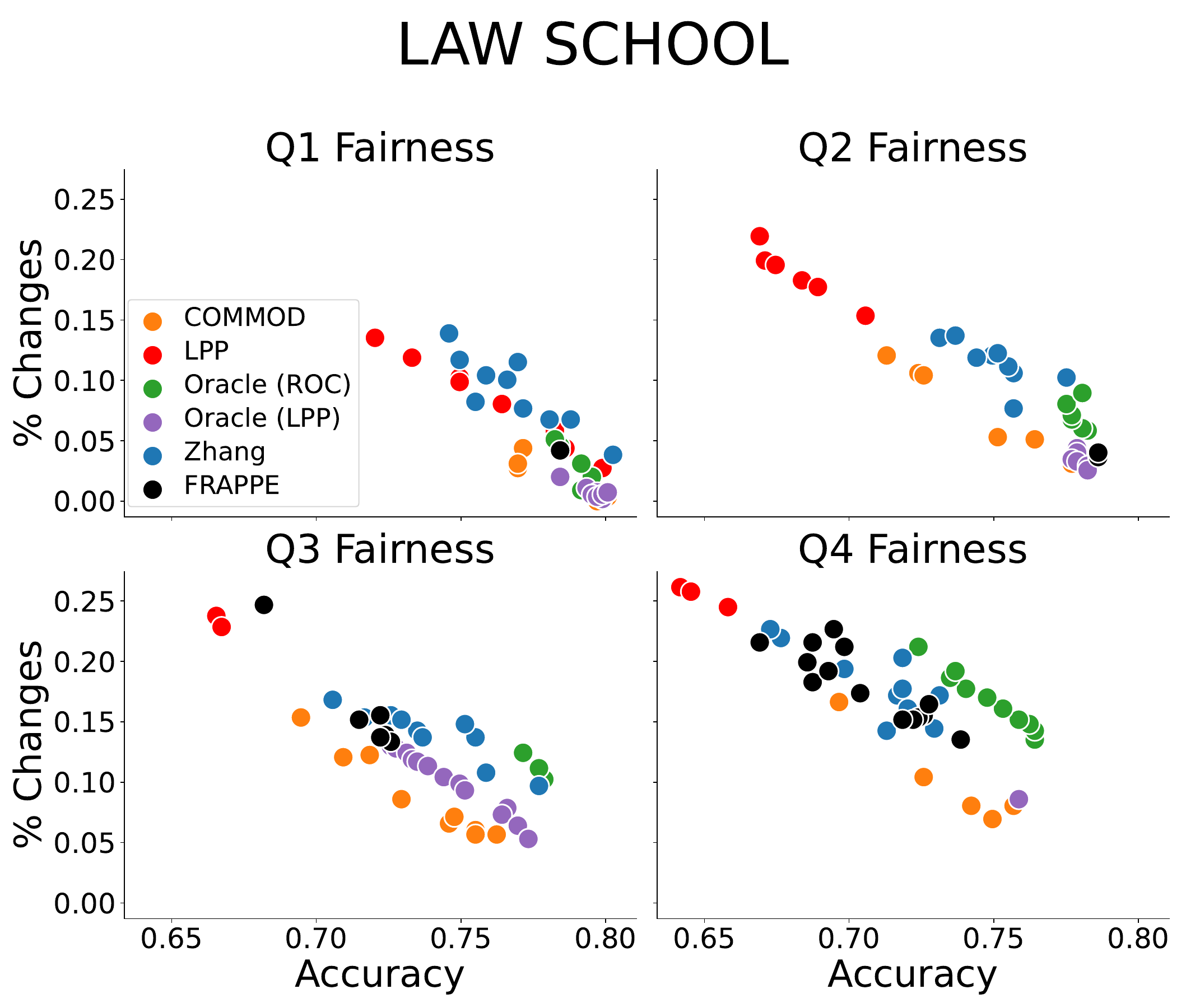}
  \includegraphics[width=0.48\linewidth]{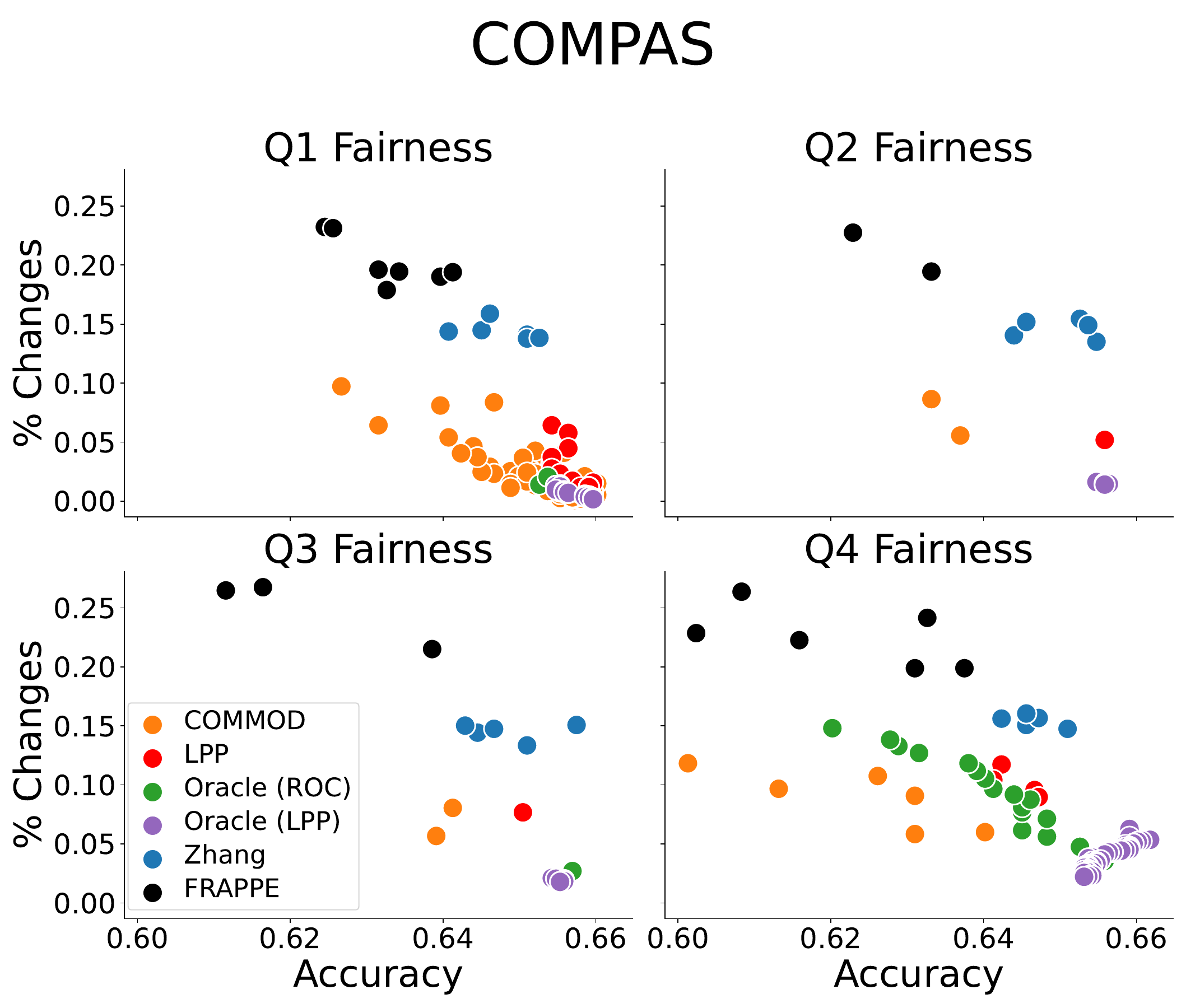}
  \caption{Number of prediction class changes (y-axis) for comparable levels of fairness (P-Rule or DM quartiles Q1-Q4) and accuracy (x-axis) for DP (top) and EO (bottom) on Law School and Compas.}
  \label{fig:n_changes_DP_EO}
\end{figure}

\begin{table}
\centering
\begin{tabular}{c|l|lllllll}
 & Method & \multicolumn{2}{c}{Q1 Fairness} & \multicolumn{2}{c}{Q2 Fairness} & \multicolumn{1}{c}{Q3 Fairness} & \multicolumn{2}{c}{Q4 Fairness} \\
 &  & Q3 Acc & Q4 Acc & Q2 Acc &  Q3 Acc & Q2 Acc & Q1 Acc & Q2 Acc \\
\hline
\multirow{6}{*}{\rotatebox{90}{Law School}} 
 & Oracle (ROC) & 0.18 & 0.08 & 0.26 & -  & - & 0.33 & - \\
 & Oracle (LPP) & - & 0.08 & - & 0.12  &  - & - & - \\
 & LPP & 0.11 & 0.07 & 0.15 & - & 0.19 &  0.23 & 0.21 \\
 & Zhang & 0.17 & 0.12 & 0.23 & 0.18 & 0.28 & 0.34 & 0.28 \\
 & FRAPPE & - & 0.03 & 0.16 & - & 0.18 &  \textbf{0.24} & 0.21 \\
 & COMMOD & \textbf{0.07} & \textbf{0.01}  & \textbf{0.10} & \textbf{0.08} & \textbf{0.13} & \textbf{0.24}  & \textbf{0.18} \\
 \hline
 \multirow{6}{*}{\rotatebox{90}{Compas}} 
 & Oracle (ROC) & 0.02 & 0.03 & -  & 0.05 & 0.06 & - & 0.09 \\
 & Oracle (LPP) & \textbf{0.02} & \textbf{0.01} & - & \textbf{0.03}  & \textbf{0.03} & - & \textbf{0.06} \\
 & LPP & 0.08 & 0.04 & 0.14 & 0.12 & - & - & - \\
 & Zhang & 0.09 & 0.07 & 0.16 & 0.12 & 0.14 & \textbf{0.20} & 0.16 \\
 & FRAPPE & - & 0.18 & 0.24 & - & 0.21 &  0.28 & - \\
 & COMMOD & 0.05 & \textbf{0.01}  & \textbf{0.09} & 0.09 & 0.05  & \textbf{0.20} & \textbf{0.06} \\
\hline
\end{tabular}
\caption{Avg. percentage of prediction class changes for various quartiles of accuracy and fairness on the Law School and Compas dataset (DP). The missing values indicate that no model trained with the corresponding method ended up in this quartile definition.}
\label{tab:results-aggregated}
\end{table}

In this first experiment, we aim to assess the ability of COMMOD to achieve competitive results in terms of fairness and accuracy while performing a lower number of changes. For this purpose, we measure the proportion $\mathcal{P}$ of prediction changes between the black-box model and the fairer model on the test set $\mathcal{D}_{\text{test}}$. Although COMMOD optimizes a continuous similarity metric over probabilities, for comparability with the baselines in this section, we report performance using accuracy, that is a \emph{discrete} metric. Indeed, several competing methods intervene by directly flipping labels, so continuous measures (e.g., calibration error) would not place all approaches on the same footing. Nevertheless, to verify that COMMOD’s probability scores remain well-behaved, we also analyze their calibration (cf. Figure \ref{fig:calibration commod appendix} in the Appendix). 
As $\mathcal{P}$ is directly linked to the fairness and accuracy levels of the models, we intend to evaluate $\mathcal{P}$ for models with \emph{comparable} levels of fairness and accuracy. To do so, we discretize the fairness scores into four segments, defined as quartiles of the scores of $AdvDebias$: Q1 corresponds to the most biased models, and Q4 to the most fair ones.
The full details of these segments are available in Appendix~\ref{sec:appendix quartiles}, along with a robustness analysis (cf. Appendix~\ref{app:other-quartiles}) in which, to ensure that our results do not depend on the specific segment definition chosen, we reproduce the same experiment with other segment definitions. For each segment, we then display the Pareto graph between $\mathcal{P}$ (y-axis, lower is better) and Accuracy (x-axis, higher is better) in Figure~\ref{fig:n_changes_DP_EO}: in this representation, the most efficient method will be the closest one to the bottom left corner. An aggregated view of these results can be found in Table~\ref{tab:results-aggregated}. 

We observe that for similar levels of accuracy and fairness, COMMOD consistently achieves lower values of number of changes than its competitors. For less fair models (segment Q1), the difference is more subtle: as models put less emphasis on fairness than on accuracy, they tend to adopt a similar behavior, i.e. the one of the biased model $f$. In other segments however, the value of COMMOD is more notable.

\subsection{Experiment 2: interpretability of the model updates}
\label{sec:interpretability}

In this second experiment, we aim to evaluate the quality of the explanations generated with COMMOD.

\subsubsection{Illustrative results}

\begin{figure}[h!]
\noindent\begin{minipage}[t]{1.0\linewidth}
\noindent \begin{minipage}{0.75\linewidth}
    \includegraphics[width=0.48\linewidth]{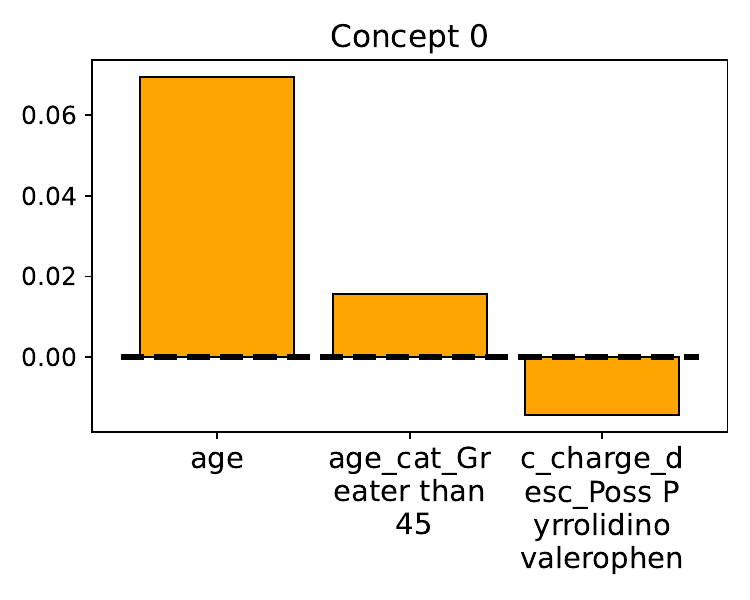}
    \hspace{-0.05\linewidth}
    \includegraphics[width=0.48\linewidth]{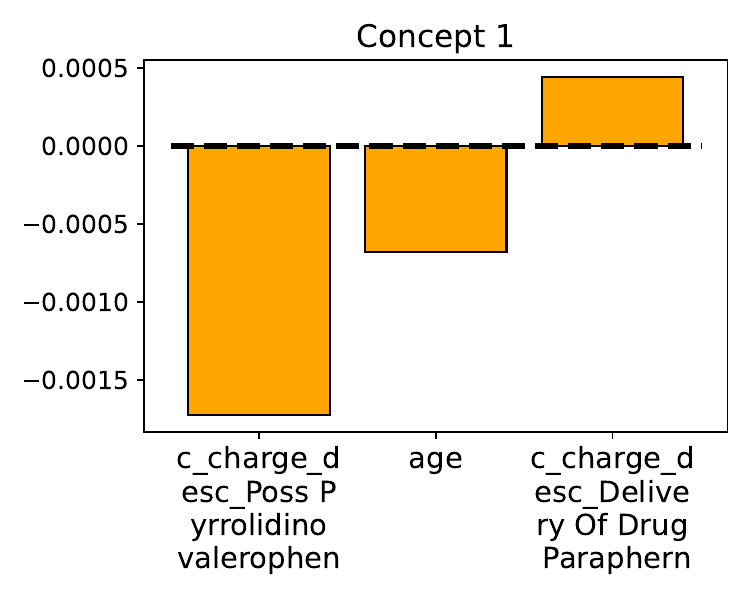}
    \end{minipage}
\hspace{-0.065\linewidth}
\begin{minipage}{0.2\linewidth}
    \resizebox{1.0\linewidth}{!}{%
    \begin{tabular}{c|c}
          & $\mathbb{P}(\hat{Y}\neq \hat{Y}^f)$\\
        \hline
         $X_{C_0}$ & $1.0$ \\
         $\bar{X}_{C_0}$ & $0.0$ \\
         \hline
         $X_{C_1}$ & $0.0$ \\
         $\bar{X}_{C_1}$ & $0.08$ \\
    \end{tabular}}
\end{minipage}
\end{minipage}
\caption{Example of explanation for Compas. Left: feature contributions to the concepts. Right: $\mathbb{P}(\hat{Y}\neq \hat{Y}^f)$ for the segments described by these features.}
\label{fig:illustrative-example}
\end{figure}

Starting with an example of explanation on the Compas Dataset. After training COMMOD with $k=2$ concepts, we obtain a final test P-Rule score of $0.78$ (up from $0.60$ with $f$) and accuracy of $0.62$ (down from $0.66$) with $\mathcal{P}=0.08$. The first two plots of Figure~\ref{fig:illustrative-example} highlight the diversity between these concepts, as they target different features. For instance, the concept $C_0$, contributing positively ($w^0=0.41$) to class changes, is primarily activated by older individuals; on the other hand, $C_1$, contributing negatively ($w^1=-0.23$), targets a specific crime category.  
In the third graph, we calculate the probability values $\mathbb{P}(\hat{Y}\neq \hat{Y}^f)$ in the sets corresponding to the instances activated by the corresponding features.
Although preliminary, these observations show the potential of COMMOD to explain the debiasing process.

\subsubsection{Quantitative results}

\begin{figure}[t!]
    \centering
    \includegraphics[width=0.45\linewidth]{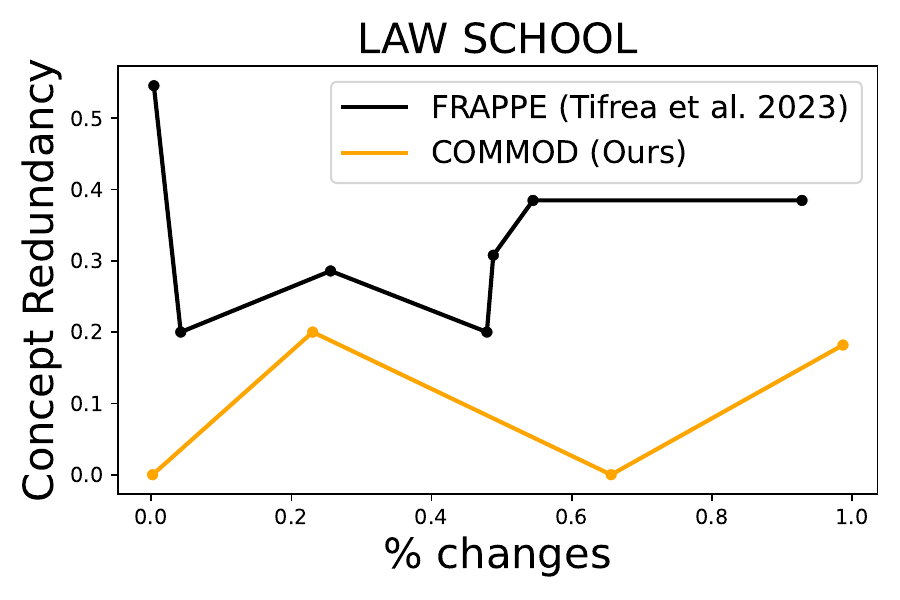}
    \includegraphics[width=0.45\linewidth]{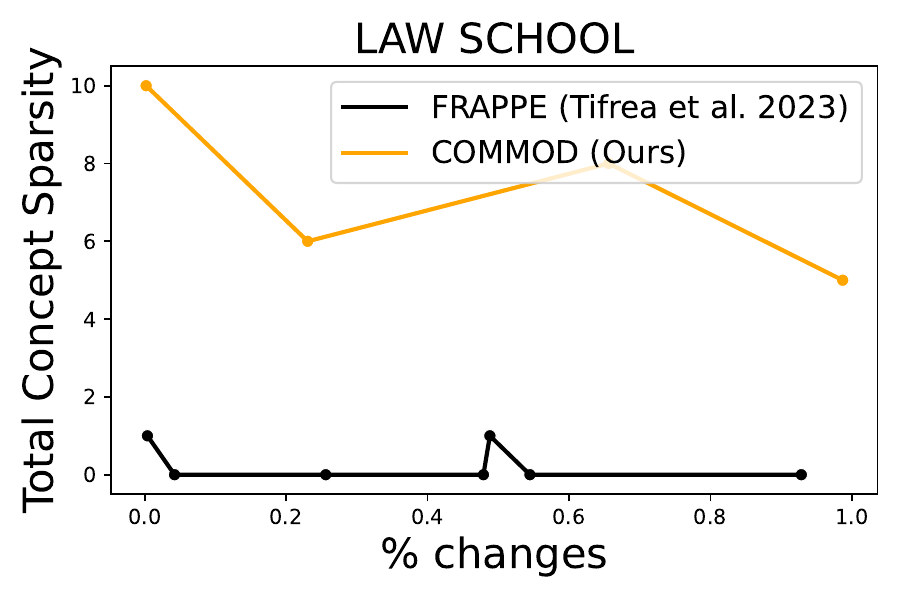}
    \caption{Comparison of concept redundancy (left, lower is better) and sparsity (right, higher is better) for the Law School dataset between FRAPPE and COMMOD.}
    \label{fig:concept-sparsity-diversity}
\end{figure}

\begin{figure}[t]
    \centering    \includegraphics[width=0.48\linewidth]{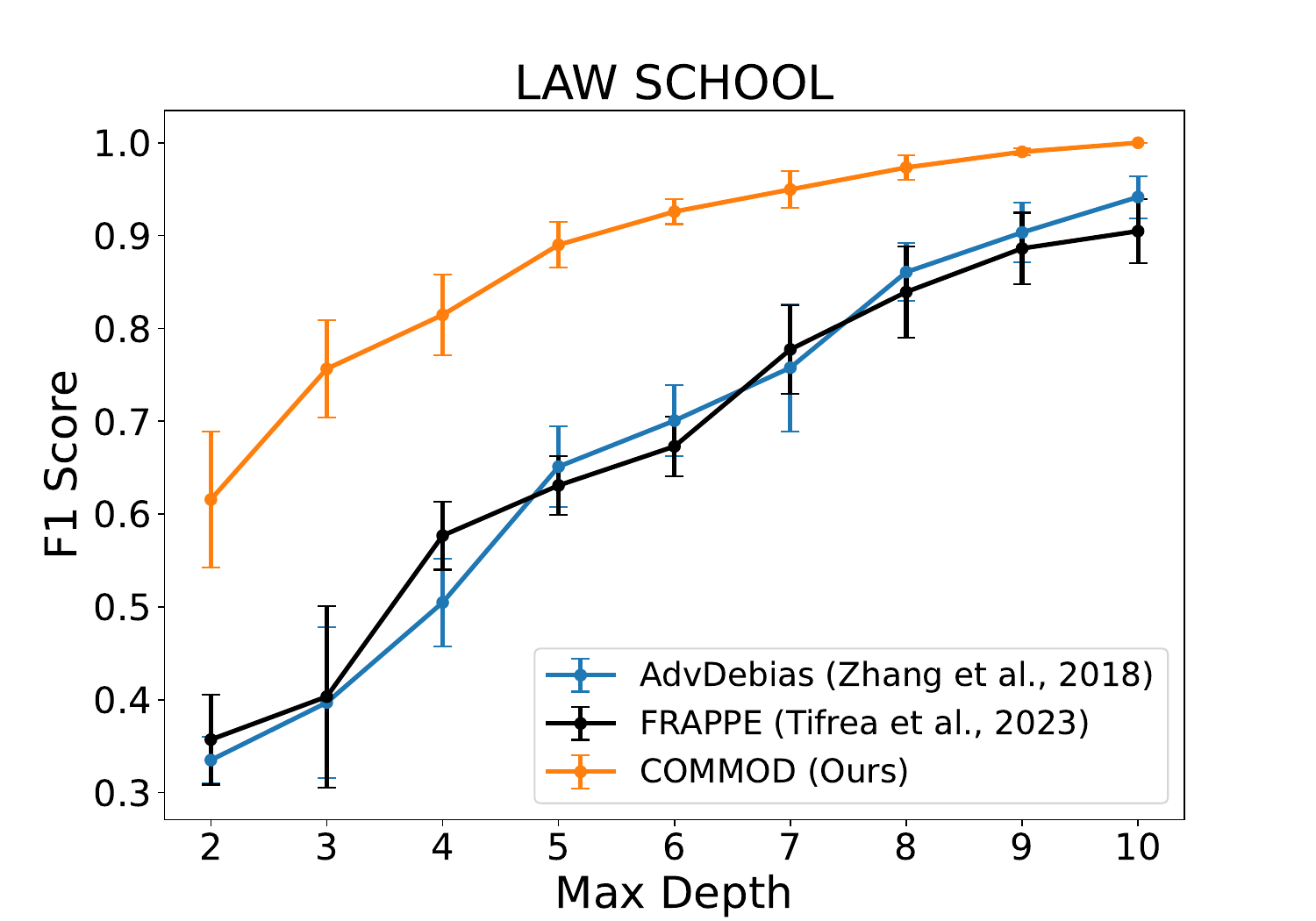}
    \includegraphics[width=0.48\linewidth]{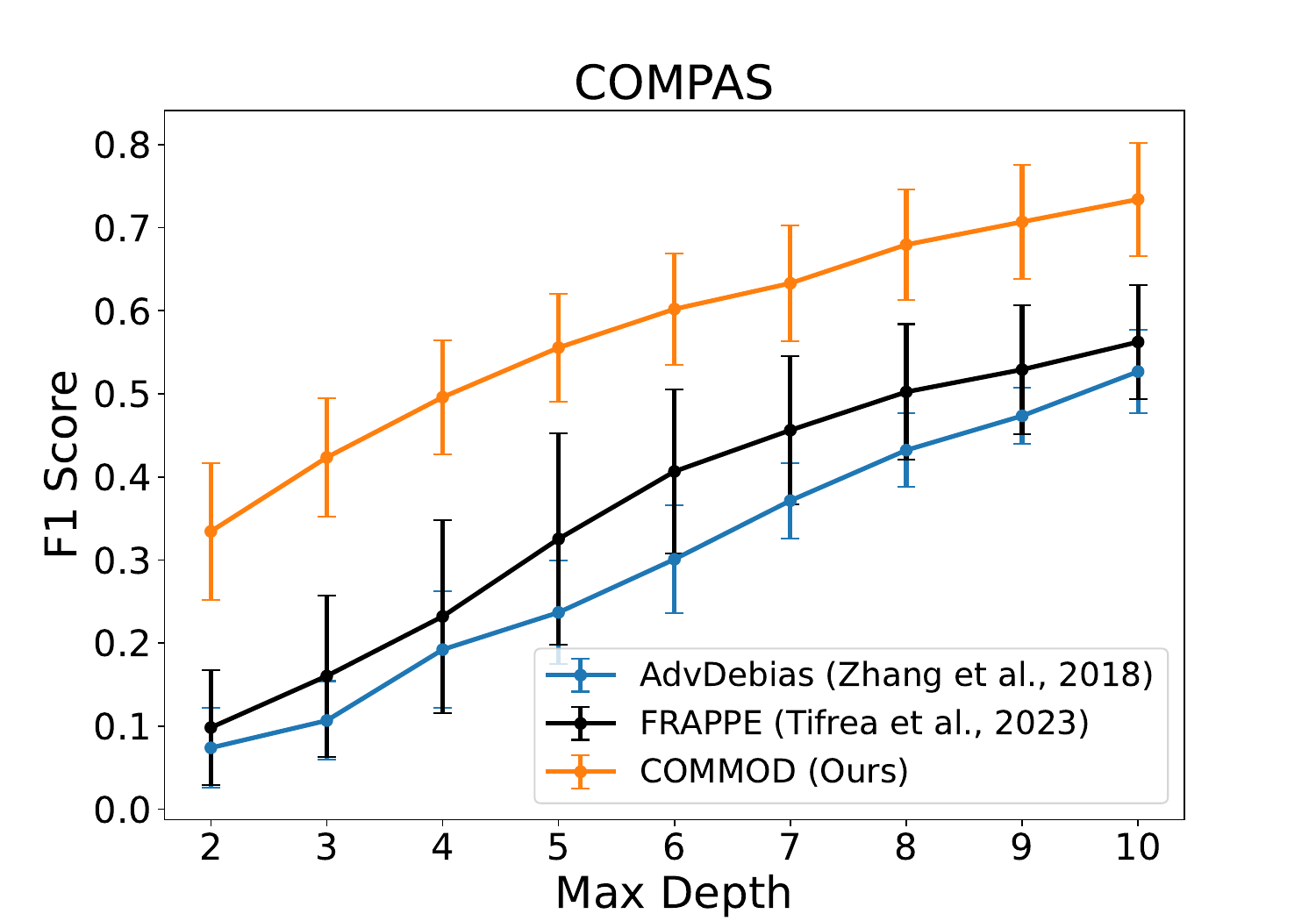}
    \caption{Avg. F1 Score with standard deviations computed over 5 runs for each model (with similar levels of both fairness and accuracy) of a decision tree trained on labels $\1_\mathrm{\hat{Y}\neq \hat{Y}^f}$ depending on the tree depth.}
    \label{fig:CART-F1-score}
\end{figure}

\paragraph{Sparsity and diversity. }We now aim to generalize the previous observation by verifying that the concepts learned are diverse and sparse over the features of $\mathcal{X}$. To assess sparsity, we evaluate the total concept sparsity computed by summing across concepts the sparsity of each matrix after applying a threshold of $\epsilon=0.01$: $\sum_{i\leq k}{||W^i>\epsilon||_0}$. For diversity, we measure the Jaccard Index between the signed non-null coefficients: diverse concepts may still use the same feature of $X$ if said feature contribution is in the opposite direction. As a baseline, we compare COMMOD (with $k=2$) with FRAPPE~\citep{tifrea2023frapp}, when the latter is trained using a linear network with the same architecture as the one used for our ratio $r_{w_g}$ for their additive term. Figure~\ref{fig:concept-sparsity-diversity}, highlights that for all levels of number of changes, the concepts learned using COMMOD are indeed more sparse and diverse as expected.

\paragraph{Locality. }
Finally, we aim to show that the instances targeted by COMMOD are generally located in regions that are easier to interpret. Concretely, the objective of this evaluation is to assess whether COMMOD enforces fairness by targetting bigger regions of the feature space rather than performing  scattered individual changes, which would be more challenging to understand for a user. 
In classification, there exists a theoretical understanding on how a class of models is interpretable based on how it can be translated into a decision tree \citep{bressan2024theory}. Driven by this work, we propose a new evaluation protocol relying on measuring the accuracy of a decision tree of constrained depth trained to predict whether an instance has its prediction changed or not (i.e. on labels $\1_\mathrm{\hat{Y}\neq \hat{Y}^f}$).
The idea behind this evaluation protocol is that a decision tree with a constrained depth will more accurately model prediction changes if these changes can be described with a lower number of features, and if these instances are more concentrated in the feature space.
We report in Figure~\ref{fig:CART-F1-score} the F1 scores of the decision trees trained to predict the class changes of COMMOD, FRAPPE and \emph{AdvDebias}. We observe that for all values of maximum tree depth considered, thanks to the regularization with $\mathcal{L}_{concept}$, the predictions changed by COMMOD are indeed much easier to interpret with a decision tree.

\subsection{Sensitivity analysis}

Most of the results presented earlier were conducted using one random seed, as concurrently observing the three metrics of interest (accuracy, p-rule and number of changes) across different seeds is difficult to visualize. Indeed, changing the seed of the experiments results in different train-test data splits, and in a different biased model $f$ in the end. Ultimately, this impacts the results of COMMOD, as the accuracy reached by the biased model defines an upper bound for all the other models (disregarding the models complexity). 
We therefore aim to confirm that the performance of COMMOD are majoritarely driven by the performance of the biased model and the values chosen for COMMOD's hyperparameters. 

To do so, we replicate with 20 different seeds the experimental protocol described in the paper: splitting the data, training a biased model, and training COMMOD with various parameters (we set $\lambda_{fair}$ to values varying between $0.5$ and $1.5$, and $\lambda_{ratio}$ to values varying between $0.01$ and $0.1$). We then model the relationship between the performance of COMMOD and the performance of the biased model and COMMOD's hyperparameters using three linear regression models:

\begin{equation}
\begin{aligned}
        \text{dist}(f,g) &\sim \beta_0 + \beta_1 \text{fair}_f + \beta_2 \text{acc}_f + \beta_3 \lambda_{ratio} + \beta_4 \lambda_{fair}\\
        \text{Acc}_g &\sim \beta_0 + \beta_1 \text{fair}_f + \beta_2 \text{acc}_f + \beta_3 \lambda_{ratio}+ \beta_4 \lambda_{fair}\\
        \text{fair}_g &\sim \beta_0 + \beta_1 \text{fair}_f + \beta_2 \text{acc}_f + \beta_3 \lambda_{ratio}+ \beta_4 \lambda_{fair}
\end{aligned}
\end{equation}

With $\text{dist}, \text{fair}$ and $\text{Acc}$ being defined in Section~\ref{sec:controlled-model-debiasing}. For the three regressions, the resulting coefficients of determination $R^2$ are $0.96$, $0.99$ and $0.97$. This shows that the performance of COMMOD along the three considered dimensions almost exclusively depends on the performance of the initial model $f$, as well as on COMMOD's hyperparameters. The impact of other sources of possible stochasticity, e.g. due to the initialization of the neural network, remains therefore limited, thus confirming the stability of our method. 

\section{Conclusion}
\label{sec:conclusion}
In this work, we introduced COMMOD, a novel model update technique that manages to enforce fairness and accuracy while making less, and more interpretable, changes to the original biased classifier. Additionally, we also provided theoretical results for this new optimization problem. Future works include researching the Controlled Model Debiasing task in non-linear settings, and further exploring the tradeoffs between the different hyperparameters to propose automated selection strategies. 

\bibliography{main}
\bibliographystyle{tmlr}

\newpage
\appendix
\onecolumn

\section{Overview of post-processing approaches}
In order to better have an overview of existing post-processing approaches, we decided to summarize (part of) them into the following table.
\begin{table*}[!h]
  \centering
    \begin{tabular}{l|cccc}
    \toprule
    \bfseries \parbox{2cm}{Paper} & \bfseries \parbox{2cm}{\centering Model \\ agnostic} & \bfseries \parbox{2cm}{\centering No Sensitive \\ at test time} & \bfseries \parbox{2cm}{\centering Metric \\ optimized} & \bfseries \parbox{2cm}{\centering Minimizes \\ changes} \\
    \midrule
    \midrule
    MNB~\citep{calders2010three} & \ding{55} (NB) & \ding{55} & DP & \ding{55} \\
    Leaf Relabeling \citep{kamiran2010discrimination} & \ding{55} (C4.5) & \ding{55} & DP & \ding{55} \\
    ROC \citep{ROC} & \ding{51} & \ding{55} & DP & \ding{55} \\
    EO post-processing \citep{hardt2016equality} & \ding{51} & \ding{55} & EO & \ding{55} \\
    Information Withholding \citep{pleiss2017fairness} & \ding{51} & \ding{55} & EO & \ding{55} \\
    IGD post-processing \citep{lohia2019bias} & \ding{51} & \ding{55} & DP & \ding{55} \\
    MultiaccuracyBoost~\citep{kim2019multiaccuracy}& \ding{51} & \ding{55} & other & \ding{55} \\
    CFBC~\citep{chzhen2019leveraging}& \ding{51} & \ding{55} & EO & \ding{55} \\
    Wass-1 post-processing \citep{jiang2020wasserstein} & \ding{51} & \ding{55} & DP & \ding{51}\\
    Wass-1 Penalized LogReg \citep{jiang2020wasserstein} & \ding{55} (LogReg) & \ding{51} & DP & \ding{51}\\
    FST \citep{wei20a} & \ding{51} & \ding{51} & DP, \ EO & \ding{55} \\
    RNF \citep{du2021fairness} & \ding{55} (NN) & \ding{51} & DP, \ EO & \ding{55} \\
    FCGP \citep{nguyen2021fairness} & \ding{51} & \ding{55} & DP & \ding{51} \\
    FairProjection \citep{alghamdi2022beyond} & \ding{51} & \ding{55} & DP, \ EO & \ding{51}\\
    FRAPP\'E~\citep{tifrea2023frapp} & \ding{51} & \ding{51} & DP, \ EO & \ding{51}
    \\
    LPP (sensitive-unaware) \citep{xian2024optimal} & \ding{51} & \ding{51} & DP, \ EO & \ding{55}\\
    LPP (sensitive-aware) \citep{xian2024optimal} & \ding{51} & \ding{55} & DP, \ EO & \ding{55}\\
    
    \bottomrule
    \textbf{COMMOD (Ours)} & \ding{51} & \ding{51} & DP, \ EO & \ding{51}\\
    \bottomrule
    \end{tabular}
  \caption{Comparison between post-processing methods. NB stands for Naive Bayes, and NN for Neural Networks.}
  \label{tab:comparison}
\end{table*}\\
From the table above, we can observe that our method is one of the few that is simultaneously model-agnostic, does not require the sensitive variable during inference, and explicitly minimizes the difference between the black-box scores and the edited ones.
\newpage

\section{Proofs of results in Section~\ref{sec:theoretical results}}
\label{sec:Proofs}

In this section, we present the proofs for the propositions and lemmas stated in the paper. Additionally, we offer some context for certain lemmas from \cite{menon2018cost}, which remain necessary within our new framework of Equation \ref{eq:optimization-pb}.

\subsection{Bayes-Optimal Classifier (Result 1)}
\label{sec: Proofs BOC}
Building on the results proposed in the paper, we first analyze the Bayes-Optimal Classifier of Equation \ref{eq:optimization-pb}. To facilitate this analysis, we draw upon the work of \cite{menon2018cost}, who demonstrated that by translating the components of the optimization problem into a cost-sensitive framework, the problem becomes analytically equivalent but more tractable to solve. \\

\subsubsection{A cost-sensitive view of fairness \cite{menon2018cost}.} For any randomized classifier $g\colon \mathcal{X}\times[0,1]\rightarrow[0,1]$ that post-process the prediction of a classifier $f\colon \mathcal{X}\rightarrow[0,1]$ we can write $FNR(g;\mathcal{\Bar{D}})=P(\hat{Y}=0\mid S=1)$ and $FPR(g;\mathcal{\Bar{D}})=P(\hat{Y}=1\mid S=0)$, where we recall that $\hat{Y}=\1_{g(X)>0.5}$.\\
If we want to study for example Demographic Parity, we can refers and compute the following quantity to measure fairness:
\begin{equation}
    R_{fair}(g) = DI(g;\mathcal{\Bar{D}}) = FPR(g;\mathcal{\Bar{D}})/(1-FNR(g;\mathcal{\Bar{D}})).
\end{equation}
Then, we can translates this quantity into a cost-sensitive risks.

\begin{lemma}[Lemma 1 in \cite{menon2018cost}]
\label{lemma: Lemma 1 menon fairness in CS}
    Pick any random classifier $g$. Then, for any $\tau\in(0,\infty)$, if $k=\frac{\tau}{1+\tau}$
    \begin{equation*}
        DI(g;\mathcal{\Bar{D}})\geq\tau \Leftrightarrow CS_{bal}(g;\mathcal{\Bar{D}},1-k)\geq k,
    \end{equation*}
    where $CS_{bal}(g;\mathcal{\Bar{D}},c)=(1-c)\cdot FNR(g;\mathcal{\Bar{D}})+c\cdot FPR(g;\mathcal{\Bar{D}})$. \\
    Further, we can also rewrite the desired “symmetrised” fairness constraint as
    \begin{equation*}
        \min(R_{fair}(g), R_{fair}(1-g)) \geq \tau \Leftrightarrow CS_{bal}(g;\mathcal{\Bar{D}},\Bar{c})\in [k, 1-k],
    \end{equation*}
    for a suitable choice of $\Bar{c}$.
\end{lemma}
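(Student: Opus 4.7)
The plan is to prove both equivalences via direct algebraic manipulation, exploiting that each fairness constraint is linear in $p := FPR(g;\bar{\mathcal{D}})$ and $q := FNR(g;\bar{\mathcal{D}})$. The choice $k = \tau/(1+\tau)$, and hence $1-k = 1/(1+\tau)$, is precisely what converts a ratio-type constraint into a weighted-average (cost-sensitive) one, so the whole argument reduces to recognizing this rescaling.

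For the first equivalence, I would unfold definitions: $DI(g;\bar{\mathcal{D}}) \geq \tau$ is $p/(1-q) \geq \tau$, which rearranges to $p + \tau q \geq \tau$. Dividing by $1+\tau$ (a strictly positive factor, so the inequality is preserved) and substituting $k$, $1-k$ yields $(1-k)\,p + k\,q \geq k$, which is exactly $CS_{bal}(g;\bar{\mathcal{D}}, 1-k) \geq k$ by the given definition $CS_{bal}(g;\bar{\mathcal{D}}, c) = (1-c)\,FNR + c\,FPR$. Every step is reversible, giving the biconditional.

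For the symmetrized version, the key observation is that for a randomized classifier $g \in [0,1]^{\mathcal{X}}$, the classifier $1-g$ swaps positive and negative predictions, so $FPR(1-g;\bar{\mathcal{D}}) = 1-p$ and $FNR(1-g;\bar{\mathcal{D}}) = 1-q$. Substituting into $R_{fair}(1-g) = FPR(1-g)/(1-FNR(1-g))$ yields $(1-p)/q \geq \tau$, equivalent to $p + \tau q \leq 1$. Dividing by $1+\tau$ gives $(1-k)\,p + k\,q \leq 1-k$, i.e., $CS_{bal}(g;\bar{\mathcal{D}}, 1-k) \leq 1-k$. Conjoining this upper bound with the lower bound from the first part and setting $\bar{c} = 1-k$ delivers the two-sided containment $CS_{bal}(g;\bar{\mathcal{D}}, \bar{c}) \in [k, 1-k]$, which is the claimed equivalence.

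I do not anticipate any serious obstacle: the argument is elementary algebra. The only points requiring care are (i) the identity $FPR(1-g) = 1-FPR(g)$, $FNR(1-g) = 1-FNR(g)$, which crucially uses that $g$ is randomized so $1-g \in [0,1]^{\mathcal{X}}$ is again a valid classifier, and (ii) the well-definedness of the ratios in $DI$, which is handled by the standard convention that degenerate constant classifiers trivially satisfy or violate the fairness constraint. The relevant regime is $\tau \in (0,1]$, which ensures $[k, 1-k]$ is non-empty; the boundary case $\tau = 1$ collapses the interval to $\{1/2\}$ and recovers exact demographic parity.
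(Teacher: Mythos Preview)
The paper does not give its own proof of this lemma: it is stated in the appendix purely as background, attributed verbatim to \cite{menon2018cost}, and invoked without argument in the proof of Lemma~\ref{lemma: lagrangian CS problem appendix}. So there is nothing in the paper to compare your proposal against.

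That said, your argument is correct. The first equivalence is exactly the rescaling you describe: $p/(1-q)\geq\tau$ iff $p+\tau q\geq\tau$ iff $(1-k)p+kq\geq k$, which is $CS_{bal}(g;\bar{\mathcal D},1-k)\geq k$ under the definition $CS_{bal}(\cdot,c)=(1-c)\,FNR+c\,FPR$ with $c=1-k$. For the symmetrised part, the identities $FPR(1-g)=1-p$ and $FNR(1-g)=1-q$ follow directly from the expectation definitions $FPR(g;\bar{\mathcal D})=\E_{X\mid S=0}[g(X)]$ and $FNR(g;\bar{\mathcal D})=\E_{X\mid S=1}[1-g(X)]$ used in the paper, and the rest is the same rescaling applied to $(1-p)/q\geq\tau$. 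Setting $\bar c=1-k$ gives the two-sided bound. Your caveats about non-emptiness of $[k,1-k]$ for $\tau\in(0,1]$ and degenerate denominators are appropriate and match the implicit conventions in the cited work.
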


\subsubsection{A cost-sensitive view of minimal changes (Ours).} In the paper, we propose then to find an equivalence of the extra (compared to the usual optimization problem in fairness) constraint we have in our new optimization problem.  To achieve this, we present Lemma \ref{lemma: CS view of minimal changes}, whose proof is provided below.

\begin{proof}[Lemma \ref{lemma: CS view of minimal changes} of Section \ref{subsec: result 1}]
    By law of total probability we have:
    \begin{equation*}
        P(\hat{Y}^f\neq\hat{Y}) = P(\hat{Y}^f\neq\hat{Y}\mid \hat{Y}=1)P(\hat{Y}=1) + P(\hat{Y}^f\neq\hat{Y}\mid \hat{Y}=0)P(\hat{Y}=0)
    \end{equation*}
    Since we are in a binary classification setting, i.e. $\hat{Y}^g\in\{0,1\}$, we can define 
    \begin{equation*}
        c^* = P(\hat{Y}=1), \ \ 1-c^*=P(\hat{Y}=0).
    \end{equation*}
    Hence, 
    \begin{equation*}
    \begin{aligned}
        P(\hat{Y}^f\neq\hat{Y}) \leq p &\Leftrightarrow c^*P(\hat{Y}^f\neq\hat{Y}\mid \hat{Y}=1)+ (1-c^*)P(\hat{Y}^f\neq\hat{Y}\mid \hat{Y}=0)P \leq p \\
        & \Leftrightarrow c^*FNR(g;\mathcal{D}^*) + (1-c^*)FPR(g;\mathcal{D}^*)\leq p\\
        & \Leftrightarrow CS_{bal}(g;\mathcal{D}^*, c^*)\leq p.
        \end{aligned}
    \end{equation*}
\end{proof}

\subsubsection{Bayes-Optimal-Classifier.} Finally, we can rewrite in a Lagrangian version the cost-sensitive problem and solve it analytically.

\begin{lemma}
\label{lemma: lagrangian CS problem appendix}
    Pick any distributions $\mathcal{D},\mathcal{\Bar{D}},\mathcal{D}^*$, fairness measure $DI$ and constraint on number of changes. Pick any $c, \tau, p \in (0,1)$. Then, $\exists\lambda_{fair}, \lambda_{ratio} \in \R, \Bar{c}, c^* \in (0,1)$ with
    \begin{equation}
        \begin{aligned}
        \min_{g} \quad &CS(g;\mathcal{D}, c) \\
        \textrm{s.t.} \quad & \min(R_{fair}(g), R_{fair}(1-g)) \geq \tau \\
        & P(\hat{Y}^f \neq \hat{Y}) \leq p
    \end{aligned}
    \end{equation}
    is equivalent to
    \begin{equation*}
        \min_{g} \quad CS(g;\mathcal{D}, c) - \lambda_{fair} CS_{bal}(g;\mathcal{\Bar{D}}, \Bar{c}) - \lambda_{ratio} CS_{bal}(g;\mathcal{D}^*, c^*).
    \end{equation*}
\end{lemma}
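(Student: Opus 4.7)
The strategy is to convert every ingredient of the constrained problem into a purely cost-sensitive form and then to invoke strong Lagrangian duality on the resulting (infinite-dimensional) linear program in $g$.

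First, I would apply Lemma \ref{lemma: Lemma 1 menon fairness in CS} to the symmetrised fairness requirement $\min(R_{fair}(g), R_{fair}(1-g)) \geq \tau$: this yields a cost parameter $\bar{c}$ and a threshold $k$ such that the fairness requirement is equivalent to $CS_{bal}(g;\bar{\mathcal{D}},\bar{c})\in[k,1-k]$. In parallel, I would apply Lemma \ref{lemma: CS view of minimal changes} to translate $P(\hat{Y}^g\neq\hat{Y})\leq p$ into $CS_{bal}(g;\mathcal{D}^*,c^*)\leq p$ with $c^*=P(\hat{Y}=1)$. After these two substitutions, the original problem reduces to
\[
\min_g\,CS(g;\mathcal{D},c)\quad\text{s.t.}\quad CS_{bal}(g;\bar{\mathcal{D}},\bar{c})\in[k,1-k],\ CS_{bal}(g;\mathcal{D}^*,c^*)\leq p.
\]

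Next, I would observe that the randomized classifier $g\in[0,1]^{\mathcal{X}}$ enters each of $CS$ and $CS_{bal}$ linearly, since every such functional is of the form $\E[w(X,Y,S,\hat{Y}^f)\,g(X)]$ for an appropriate integrable weight $w$. Hence the reformulated problem is a linear program in $g$ with three linear constraints. Under the mild Slater-type assumption that the feasible set contains a strictly feasible randomized classifier (for instance $g\equiv 1/2$, provided the constraints hold with strict slack), strong duality furnishes non-negative multipliers $\mu_1,\mu_2,\mu_3\geq 0$ attached respectively to the two fairness bounds and to the change bound, such that any minimizer of the constrained problem also minimizes the Lagrangian with those multipliers. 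Collecting like terms, the Lagrangian reads
\[
CS(g;\mathcal{D},c)+(\mu_2-\mu_1)\,CS_{bal}(g;\bar{\mathcal{D}},\bar{c})+\mu_3\,CS_{bal}(g;\mathcal{D}^*,c^*)+\mathrm{const},
\]
so setting $\lambda_{fair}:=\mu_1-\mu_2\in\R$ and $\lambda_{ratio}:=-\mu_3\in\R$ and dropping the $g$-independent constant yields exactly the expression in the statement.

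The main obstacle I anticipate is twofold. First, the symmetrised fairness bracket $[k,1-k]$ genuinely contributes two multipliers, and one must justify collapsing them into the single scalar $\lambda_{fair}$; this works because only the combination $\mu_1-\mu_2$ couples to $g$, and complementary slackness typically forces at most one of $\mu_1,\mu_2$ to be active at the optimum. Second, one has to justify strong duality in the function-analytic setting of randomized classifiers $[0,1]^{\mathcal{X}}$: since the objective and all constraints are linear in $g$ and the feasible set has nonempty interior under Slater's condition, the classical minimax argument applies, but this needs to be stated explicitly to upgrade weak duality (which always holds) to the full equivalence of optimization problems claimed by the lemma.
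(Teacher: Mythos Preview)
Your proposal is correct and follows essentially the same route as the paper: apply Lemma~\ref{lemma: Lemma 1 menon fairness in CS} and Lemma~\ref{lemma: CS view of minimal changes} to rewrite the two constraints in cost-sensitive form, introduce three nonnegative multipliers for the resulting interval and inequality constraints, and then collapse the two fairness multipliers into a single real $\lambda_{fair}$ (the paper writes $\lambda_{fair}=\lambda_1-\lambda_2$, $\lambda_{ratio}=\lambda_3$). Your treatment is in fact more careful than the paper's, which simply writes down the Lagrangian without explicitly invoking linearity in $g$ or a Slater/strong-duality argument.
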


\begin{proof}[Proof (Lemma \ref{lemma: lagrangian CS problem appendix})]
By Lemma \ref{lemma: Lemma 1 menon fairness in CS} and Lemma \ref{lemma: CS view of minimal changes} we have that 
\begin{equation*}
\begin{aligned}
    \min(R_{fair}(g), R_{fair}(1-g)) \geq \tau &\Leftrightarrow CS_{bal}(g;\mathcal{\Bar{D}},\Bar{c})\in [k, 1-k] ,\\
    P(\hat{Y}^f\neq\hat{Y})\leq p &\Leftrightarrow CS_{bal}(f;\mathcal{D}^*,c^*)\leq p.
\end{aligned}
\end{equation*}
Consequently, for $\lambda_1, \lambda_2, \lambda_3 \geq 0$, the corresponding Lagrangian version will be
\begin{equation*}
\begin{aligned}
    \min_{g} CS(g; \mathcal{D}, c) + \lambda_1 (CS_{bal}(g; \mathcal{\Bar{D}}, \Bar{c})-(1-k)) - \lambda_2 (CS_{bal}(g; \mathcal{\Bar{D}}, \Bar{c})-k) - \lambda_3 (CS_{bal}(g; \mathcal{D}^*, c^*)-p).
\end{aligned}
\end{equation*}
Letting $\lambda_{fair}=\lambda_1-\lambda_2$ and $\lambda_{ratio}=\lambda_3$ shows the results.
\end{proof}

\begin{lemma}
\label{lemma:CS expectation Lemma 9 menon}
    Pick any randomised classifier $g$. Then, for any cost parameter $c\in(0, 1)$,
    \begin{equation*}
        CS(g; c) = (1-c)\pi + \E_X[(c - \eta(X)) g(X)],
    \end{equation*}
where $\eta(x) = P(Y = 1 | X = x)$ and $\pi=P(Y = 1)$
\end{lemma}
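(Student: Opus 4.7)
The plan is to unfold the definition of cost-sensitive risk from the footnote of Section~\ref{subsec: result 1},
\[
CS(g;\mathcal{D},c) = \pi(1-c)\,FNR(g;\mathcal{D}) + (1-\pi)\,c\,FPR(g;\mathcal{D}),
\]
with $FNR(g;\mathcal{D}) = \E_{X\mid Y=1}[1-g(X)]$ and $FPR(g;\mathcal{D}) = \E_{X\mid Y=0}[g(X)]$, and then to collapse the two class-conditional expectations into a single marginal expectation over $X$.

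First I would pull out the additive constant by expanding the $FNR$ term as $\pi(1-c)\,FNR(g;\mathcal{D}) = \pi(1-c) - \pi(1-c)\,\E_{X\mid Y=1}[g(X)]$, which already produces the $(1-c)\pi$ offset appearing in the claim. The remaining terms depending on $g$ are then $-\pi(1-c)\,\E_{X\mid Y=1}[g(X)] + (1-\pi)c\,\E_{X\mid Y=0}[g(X)]$.

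The key step is the Bayes/tower identity $\pi\,\E_{X\mid Y=1}[h(X)] = \E_X[\eta(X)h(X)]$ and its complement $(1-\pi)\,\E_{X\mid Y=0}[h(X)] = \E_X[(1-\eta(X))h(X)]$, each of which follows from $\pi\,p(x\mid Y=1) = \eta(x)\,p(x)$ and its analogue for $Y=0$. Applying these with $h=g$ to the two $g$-dependent terms gives
\[
\E_X\!\big[g(X)\big(-(1-c)\eta(X) + c(1-\eta(X))\big)\big],
\]
and simplifying the coefficient of $g(X)$ collapses it to $c - \eta(X)$, yielding precisely $(1-c)\pi + \E_X[(c-\eta(X))g(X)]$.

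I expect no real obstacle here: the lemma is a routine rewriting of the class-conditional risk as a marginal expectation over $X$. Its purpose is to isolate $g(X)$ as the only factor depending on the classifier, so that in the proof of Proposition~\ref{lemma:BO-DP classifier} the Lagrangian $R_{CS}$—which is a sum of three such cost-sensitive terms, each contributing a linear-in-$g(X)$ integrand via this lemma—can be minimized pointwise in $g(X)$, producing the thresholding form $H_\alpha \circ s^*(x)$ of the Bayes-optimal classifier.
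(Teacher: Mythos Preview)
Your proposal is correct and follows essentially the same approach as the paper: the paper simply cites Lemma~9 of \cite{menon2018cost} for the $(Y,\mathcal{D})$ case and then sketches the identical computation for the $(\hat{Y}^f,\mathcal{D}^*)$ case, whereas you spell out the full derivation via the Bayes identity $\pi\,p(x\mid Y=1)=\eta(x)\,p(x)$. The argument and its role in the pointwise minimization for Proposition~\ref{lemma:BO-DP classifier} are exactly as you describe.
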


\begin{proof}[Proof (Lemma \ref{lemma:CS expectation Lemma 9 menon})]
    For the case of accuracy and fairness constraints the proof is in Lemma 9 of \cite{menon2018cost}.
    For what concerns the constraint on similarity between the black-box predictions $\hat{Y}^f$ and the one coming from the post-processed classifier $g(X)$, FPR and FNR refers to $\hat{Y}^f$ and no more to $S$. Hence, in the same spirit of the above mentioned proofs,
    \begin{equation*}
        \begin{aligned}
            CS(g; \mathcal{D}^*, c^*) &= (1 - c^*) \pi^* \E_{X\mid \hat{Y}^f=1}[1-g(X)] + c^*(1-\pi^*)\E_{X\mid \hat{Y}^f=0}[g(X)] \\
            & = (1-c^*)\pi^*\E_{X}[(c^*-\eta^*(X))g(X)],
        \end{aligned}
    \end{equation*}
    where $\eta^*(X)=P(\hat{Y}^f = 1 | X = x)$ and $\pi^*=P(\hat{Y}^f=1)$.
\end{proof}

After introduced all these lemma and definitions, we are now able to provide the proof of Lemma \ref{lemma:BO-DP classifier} stated in the paper and that provides the definition of the Bayes-Optimal Classifier.

\begin{proof}[Proof (Lemma \ref{lemma:BO-DP classifier})]
     By Lemma \ref{lemma:CS expectation Lemma 9 menon}, measures of accuracy, fairness and minimal changes are
     $$CS=(1-c)\pi + \E_X[(c - \eta(X)) g(X)],$$
     $$CS_{bal}^{fair}=(1-\Bar{c})\Bar{\pi} + \E_X[(\Bar{c} - \Bar{\eta}(X)) g(X)],$$
     $$CS_{bal}^{ratio}=(1-c^*)\pi^* + \E_X[(c^* - \eta^*(X)) g(X)].$$
     Then, ignoring constants not dependent from $g$, the overall objective becomes
     \begin{equation*}
         \begin{aligned}
             \min_{g} R_{CS}(g;\mathcal{D}, \mathcal{\Bar{D}}, \mathcal{D}^*) &= \min_{g} CS(g;\mathcal{D}, c) - \lambda_{fair} CS_{bal}(g;\mathcal{\Bar{D}}, \Bar{c}) - \lambda_{ratio} CS_{bal}(g;\mathcal{D}^*, c^*)\\
             & = \min_{g} \E_X [\{\eta(x) - c -\lambda_{ratio}(\Bar{\eta}(x) - \Bar{c}) -\lambda_{fair}(\eta^*(x)-c^*)\}g(X)] \\
             & = \min_{g} \E_X [-s^*(X)g(X)].
         \end{aligned}
     \end{equation*}
    Thus, at optimality, when $s^*(x)\neq 0, \ g_{opt}=[\![ s^*(x)>0 ]\!]$. When $s^*(x)=0$, any choice of $g_{opt}$ is admissible.
\end{proof}

\subsection{Fairness Level Under a Maximum Change Constraint (Result 2)}
\label{sec:appendix fairness level under K changes}

\begin{proof}[Proof of Proposition \ref{lemma:fairness-changes}]
    Without loss of generality, let us suppose that $\textit{P-Rule}(f)=C\cdot \frac{\gamma_{11}}{\gamma_{10}}$. Then, by doing $K$ changes with $K<K_s$, we get a new value of $\textit{P-Rule}'$ that is given by 
    \begin{equation*}
        \textit{P-Rule}^\alpha(g)=
         C \cdot \frac{\gamma_{11}(f)+\alpha}{\gamma_{10}(f)-(K-\alpha)},
    \end{equation*}
    Further, if we compute the derivative with respect to alpha of $\textit{P-Rule}'$ we get 
    \begin{equation}
        \frac{d}{d\alpha}\textit{P-Rule}^\alpha(g) = C \frac{\gamma_{11}(f)-K-\gamma_{10}(f)} {(\gamma_{10}(f)-(K-\alpha))^2},
    \end{equation}
    and we observe that $\textit{P-Rule}^\alpha(g)$ is either everywhere decreasing or increasing. Hence,
    \begin{equation*}
        \begin{aligned}
            \max_{ 0\leq\alpha\leq K} & \textit{P-Rule}^\alpha(g)
            = \begin{cases}
                \textit{P-Rule}^0(g) \ \ \text{if} \ K>\gamma_{10}(f)-\gamma_{11}(f)\\
                \textit{P-Rule}^K(g) \ \ \text{otherwise}            \end{cases} \ .
        \end{aligned}
    \end{equation*}
    Finally, the same reasoning applies to the case where the starting value of the $\textit{P-Rule}^\alpha(f)$ is $\frac{1}{C}\frac{\gamma_{11}(f)}{\gamma_{10}(f)}$.
\end{proof}

\begin{proof}[Proof of Proposition \ref{prop:opt-min-DM}]
Writing out the sum,
\[
\mathrm{DM}(x)
=\Delta_{\rm TPR}-\gamma x
\;+\;\Delta_{\rm FPR}-\delta(K-x)
=\bigl(\Delta_{\rm TPR}+\Delta_{\rm FPR}-\delta K\bigr)
+(\delta-\gamma)\,x.
\]
Thus \(\mathrm{DM}(x)\) is an affine (linear) function of \(x\) with slope
\(\delta-\gamma\).

\begin{itemize}
  \item If \(\delta-\gamma>0\), the slope is positive, so \(\mathrm{DM}_{\rm sum}(x)\) is increasing in \(x\) and its minimum on \([0,K]\) occurs at \(x=0\) (all flips to FPR‐adjustment).
  \item If \(\delta-\gamma<0\), the slope is negative, so \(\mathrm{DM}_{\rm sum}(x)\) is decreasing in \(x\) and its minimum occurs at \(x=K\) (all flips to TPR‐adjustment).
  \item If \(\delta=\gamma\), the function is constant and any allocation \(x\in[0,K]\) yields the same DM.
\end{itemize}

In all cases, an “extreme” allocation—putting all \(K\) flips into the single most effective action—minimizes the sum‐based Disparate Mistreatment.  
\end{proof}

\section{Implementation details}
\label{sec:appendix implementatio details}

\subsection{COMMOD Algorithm}
In this section we give a more detailed walk‐through of our end‐to‐end training procedure for COMMOD, as well as a compact pseudocode listing.

\paragraph{Discussion.}
At each training step, for a minibatch of examples $(x_i, y_i, s_i)$ we first convert the biased model’s output probability $f(x_i)$ into logits
\[
  f_{\mathrm{logit}}(x_i) \;=\;\log\!\bigl(f(x_i)/(1 - f(x_i))\bigr).
\]
We then concatenate (or otherwise condition on) both $x_i$ and $f_{\mathrm{logit}}(x_i)$ as input to a small rescaling network $r_{w_g}$, which produces a scalar multiplier $r(x_i)$.  The corrected score
\[
  g(x_i) \;=\; r(x_i)\,f_{\mathrm{logit}}(x_i)
\]
is fed both to the final classification loss $\mathcal L_Y$ (against $y_i$) and to an adversary $h$ that predicts the sensitive attribute $s_i$, giving $\mathcal L_S$.  In addition we regularize $r$ to stay close to 1 via
\[
  \mathcal L_{\mathrm{ratio}}(r(x_i)) \;=\;\bigl\lVert r(x_i) - 1\bigr\rVert^2,
\]
and add a concept‐based penalty $\mathcal L_{\mathrm{concepts}}$.  All four terms are weighted and summed, and we backpropagate through $r$ and $h$ jointly.

\begin{algorithm}[h]
\caption{COMMOD Training (simplified pseudocode)}
\label{alg:commod-simple}
\begin{algorithmic}[1]
  \REQUIRE Pretrained biased model $f$, rescaler $r$, adversary $h$, data $\{(x_i, y_i, s_i, \hat y_i)\}$. 
    \COMMENT{\(\hat y_i = f(x_i)\) are probabilities, \(y_i\in\{0,1\}\) are true labels}.
  \FOR{each epoch}
    \FOR{each minibatch $(x,y,s,\hat y)$}
      \STATE $f_{\mathrm{logit}} \gets \log\bigl(\hat y/(1-\hat y)\bigr)$
      \STATE $r \gets r(x)$
      \STATE $g \gets \sigma\bigl(r \times f_{\mathrm{logit}}\bigr)$
      \STATE \textbf{(optional)} Warm-up adversary: update $h$ on $(g,s)$
      \STATE Compute losses:
        \[
          \mathcal L_Y,\;
          \mathcal L_S,\;
          \mathcal L_{\mathrm{ratio}},\;
          \text{(optional concept losses)}
        \]
      \STATE Combine into total loss $\mathcal L$ (with weights and epoch gates)
      \STATE Update parameters of $r$ (and any concept modules) by backpropagating $\nabla\mathcal L$
    \ENDFOR
  \ENDFOR
\end{algorithmic}
\end{algorithm}

\subsection{Loss and Hyperparameters}
\label{sec:appendix hyperparams}
As with any deep learning-based method, fine-tuning hyperparameters has been crucial for our COMMOD algorithm. Specifically, we adjusted the weights of the terms in the loss function using the hyperparameters $\lambda_{\text{fair}}$, $\lambda_{\text{ratio}}$, and $\lambda_{\text{concepts}}$. To explore the entire Pareto frontier of Fairness vs. Accuracy, as presented in this paper, we performed a grid search over these hyperparameters.
For implementation purposes, we occasionally found it easier to decompose the concept loss $\mathcal{L}_{\text{concepts}}$ into the sum of sparsity loss and diversity loss, each controlled by separate hyperparameters. The range of values we tested remained consistent across different datasets, with $\lambda_{\text{fair}} \leq 10$, $\lambda_{\text{ratio}} \leq 0.5$, and $\lambda_{\text{concepts}} \leq 1$.

\subsection{Quartiles definition}
\label{sec:appendix quartiles}
In the paper, we referred several times to the quartiles of fairness and accuracy. We used them in order to compare the methods in the region of the Pareto for which they have approximately the same levels of fairness and accuracy. These quartiles have been computed on the \textit{AdvDebias} method and their values are shown in the tables below.

\subsection{Law School Dataset}
\subsubsection{Demographic Parity.} We recall that in order to measure Demographic Parity we used the \textit{P-Rule} (the higher, the better). The value of the point $(fairness, \ accuracy)$ of the black-box model (Logistic Regression) is $(0.2764, \ 0.7970)$. 
\begin{table}[h!]
    \centering
    \begin{tabular}{|c|c|c|}
        \hline
        Quartile & Fairness range & Accuracy range \\
        \hline
        Q1 & [0, 0.5587) & [0, 0.6709) \\
        Q2 & [0.5587, 0.7212) & [0.6709, 0.7294) \\
        Q3 & [0.7212, 0.8719) & [0.7294, 0.7550) \\
        Q4 & [0.8719, 1] & [0.7550, 1] \\
        \hline
    \end{tabular}
    \caption{Fairness (Demographic Parity) and Accuracy Quartiles on Law School dataset.}
    \label{tab:fairness_quartiles law school DP}
\end{table}

\subsubsection{Equalizing Odds.} We recall that in order to measure Equalizing Odds we used the \textit{Disparate Mistreatment} (the lower, the better).
The value of the point $(fairness, \ accuracy)$ of the black-box model (Logistic Regression) is $(0.4935,\ 0.7970)$.  
\begin{table}[h!]
    \centering
    \begin{tabular}{|c|c|c|}
        \hline
        Quartile & Fairness range & Accuracy range \\
        \hline
        Q1 & (0.3429, 1] & [0, 0.7230) \\
        Q2 & (0.2415, 0.3429] & [0.7230, 0.7458) \\
        Q3 & (0.1503, 0.2415] & [0.7458, 0.7577) \\
        Q4 & [0, 0.1503] & [0.7577, 1] \\
        \hline
    \end{tabular}
    \caption{Fairness (Equalizing Odds) and Accuracy Quartiles on Law School dataset.}
    \label{tab:fairness_quartiles law school EO}
\end{table}
 
\subsection{COMPAS Dataset}
\subsubsection{Demographic Parity}. We recall that in order to measure Demographic Parity we used the \textit{P-Rule} (the higher, the better).
The value of the point $(fairness, \ accuracy)$ of the black-box model (Logistic Regression) is $(0.6310, \ 0.6580)$. 
\begin{table}[h!]
    \centering
    \begin{tabular}{|c|c|c|}
        \hline
        Quartile & Fairness range & Accuracy range \\
        \hline
        Q1 & [0, 0.7345) & [0, 0.6242) \\
        Q2 & [0.7345, 0.7695) & [0.6242, 0.6391) \\
        Q3 & [0.7695, 0.8058) & [0.6391, 0.6537) \\
        Q4 & [0.8058, 1] & [0.6537, 1] \\
        \hline
    \end{tabular}
    \caption{Fairness (Demographic Parity) and Accuracy Quartiles on COMPAS dataset.}
    \label{tab:fairness_quartiles COMPAS DP}
\end{table}

\subsubsection{Equalizing Odds.} We recall that in order to measure Equalizing Odds we used the \textit{Disparate Mistreatment} (the lower, the better).
The value of the point $(fairness, \ accuracy)$ of the black-box model (Logistic Regression) is $(0.2828,\ 0.6580)$. 
\begin{table}[h!]
    \centering
    \begin{tabular}{|c|c|c|}
        \hline
        Quartile & Fairness range & Accuracy range \\
        \hline
        Q1 & (0.2198, 1] & [0, 0.6242) \\
        Q2 & (0.2079, 0.2198] & [0.6242, 0.6391) \\
        Q3 & (0.1869, 0.2079] & [0.6391, 0.6537) \\
        Q4 & [0, 0.1869] & [0.6537, 1] \\
        \hline
    \end{tabular}
    \caption{Fairness (Equalizing Odds) and Accuracy Quartiles on COMPAS dataset.}
    \label{tab:fairness_quartiles COMPAS EO}
\end{table}

\newpage

\section{Additional Experiments and Further Results}
\label{sec:appendix additional experiments}

\subsection{Experiment 1: robustness over segment definition}
\label{app:other-quartiles}

To ensure that the results of Experiment 1 are robust to the choice of the segment definitions for accuracy and fairness, we propose in this section to reproduce the experiment with other segment definitions. To define these segments, we thus propose, to take the quartile values of the results achieved by LPP (Definition 2) and ROC (Definition 3), instead of AdvDebias. 
\subsubsection{Definition 2.}
The quartiles values obtained through the result of LPP are the following
\begin{table}[h!]
    \centering
    \begin{tabular}{|c|c|c|}
        \hline
        Quartile & Fairness range & Accuracy range \\
        \hline
        Q1 & [0, 0.4709] & [0, 0.7459) \\
        Q2 & (0.4709, 0.5918] & [0.7459, 0.7541) \\
        Q3 & (0.5918, 0.7763] & [0.7541, 0.7723) \\
        Q4 & [0.7763, 1.0] & [0.7723, 1] \\
        \hline
    \end{tabular}
    \caption{Fairness (DP) and Accuracy Quartiles (Definition 2) on Law School dataset.}
\end{table}\\
Using this values, we can then plot the analogue of Fig. \ref{fig:n_changes_DP_EO} 
\begin{figure}[h!]
\centering
  \includegraphics[width=0.60\columnwidth]{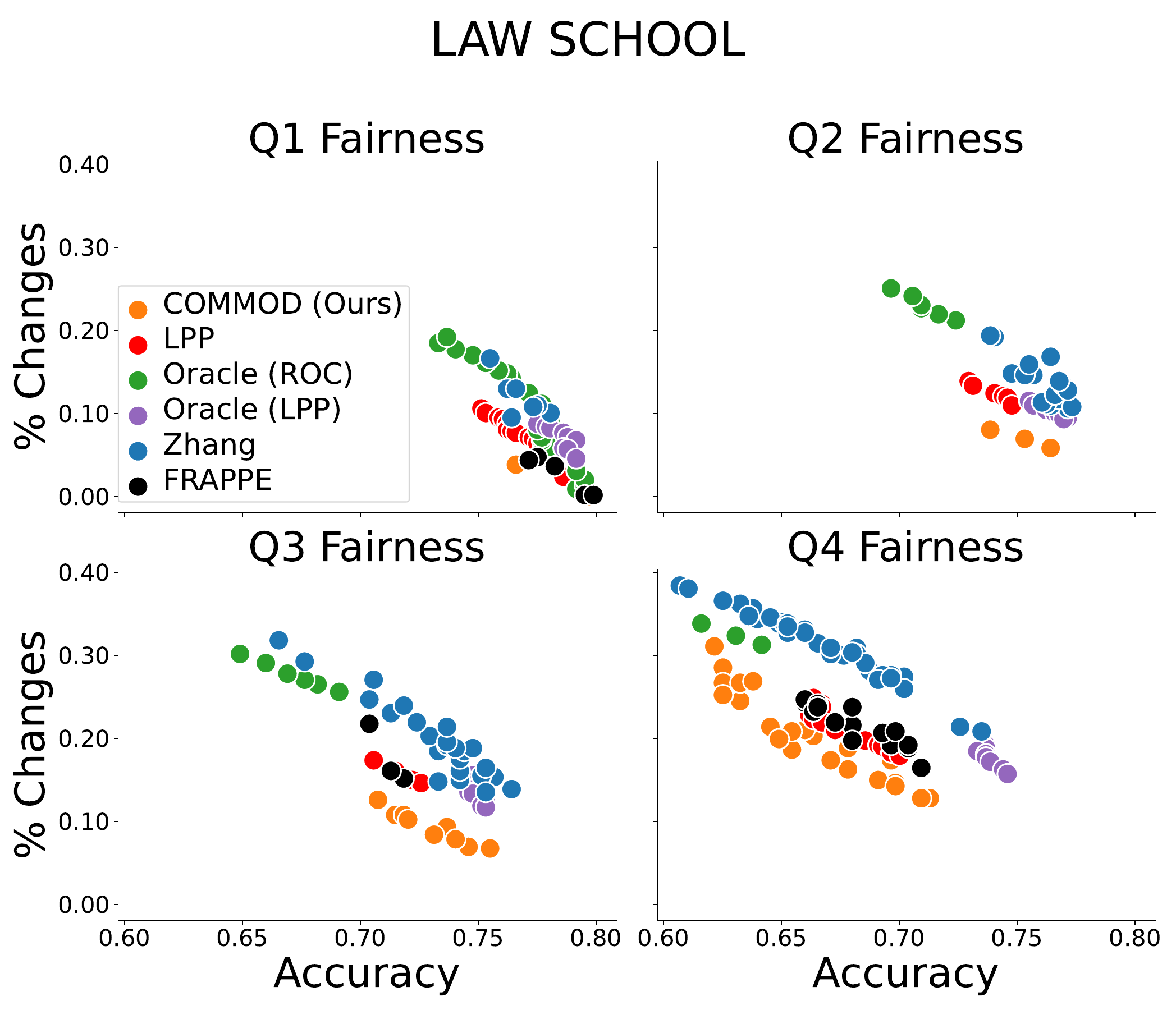}
\caption{Experiment 1 results, with Definition 2 for Accuracy and Fairness segments }
\end{figure}

\subsubsection{Definition 3.}
The quartiles values obtained through the results of ROC are the following
\begin{table}[h!]
    \centering
    \begin{tabular}{|c|c|c|}
        \hline
        Quartile & Fairness range & Accuracy range \\
        \hline
        Q1 & [0, 0.3247] & [0, 0.7033) \\
        Q2 & (0.3247, 0.4127] & [0.7033, 0.7559) \\
        Q3 & (0.4127, 0.5695] & [0.7559, 0.7793) \\
        Q4 & [0.5695, 1] & [0.7793, 1] \\
        \hline
    \end{tabular}
    \caption{Fairness (DP) and Accuracy Quartiles (Definition 3) on Law School dataset.}
\end{table}\\
Using this values, we can then plot the analogue of Fig. \ref{fig:n_changes_DP_EO} 
\begin{figure}[h!]
\centering
  \includegraphics[width=0.60\columnwidth]{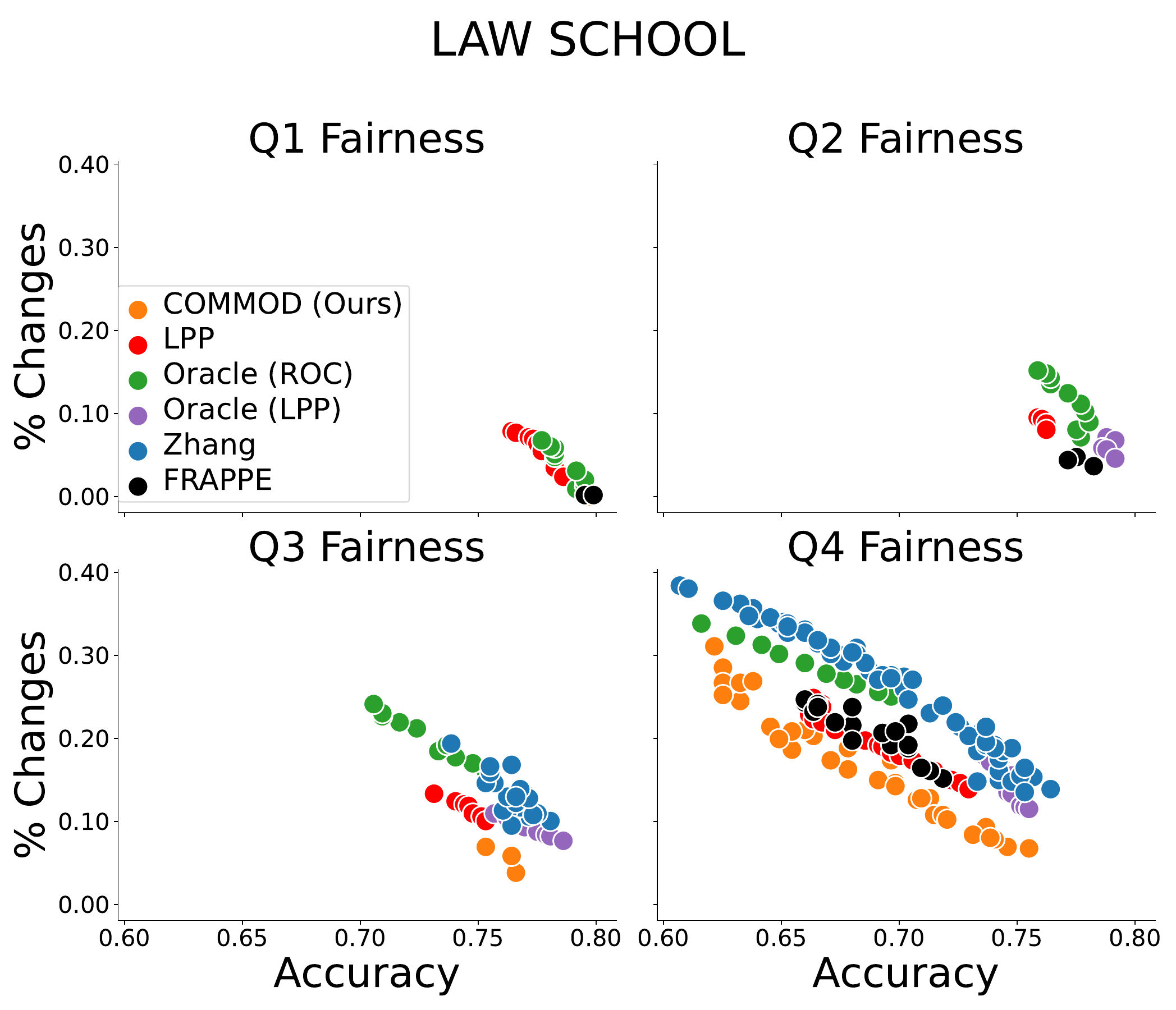}
\caption{Experiment 1 results, with Definition 3 for Accuracy and Fairness segments }
\end{figure}

\subsection{Accuracy and Fairness Performance}
\label{sec:pareto plot experiment}

\begin{figure}[h]
\centering  \includegraphics[width=0.49\linewidth]{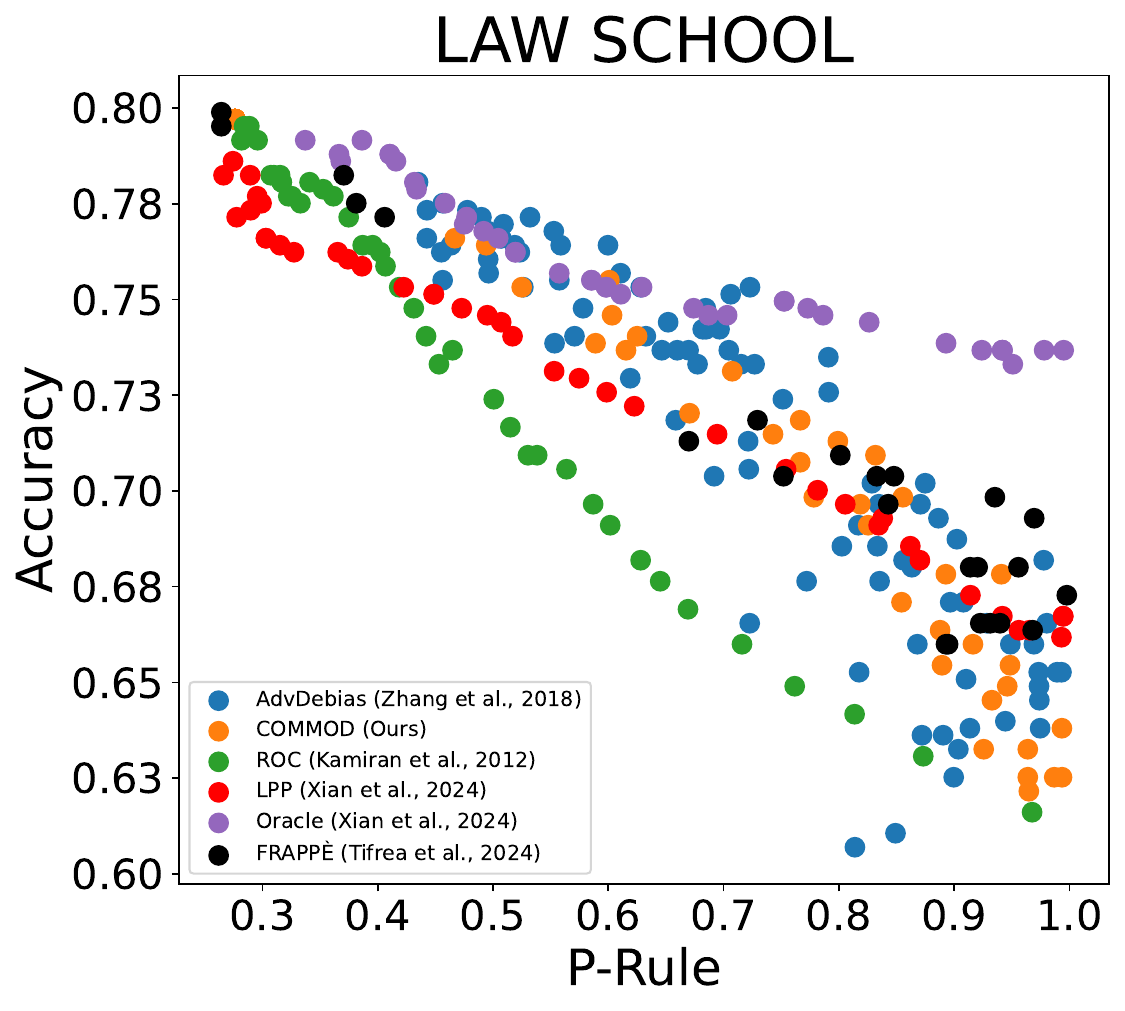}
\includegraphics[width=0.49\linewidth]{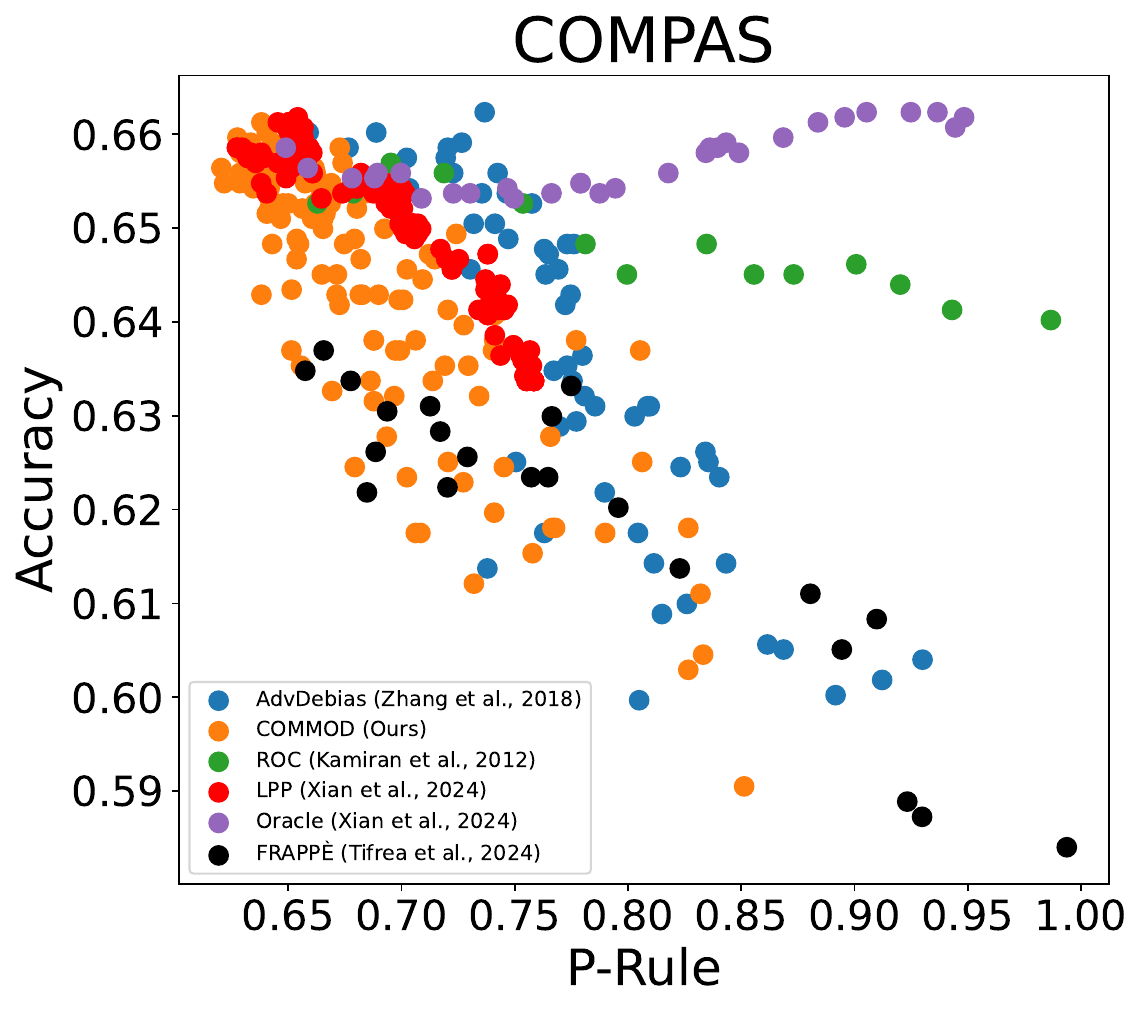}
\includegraphics[width=0.49\linewidth]{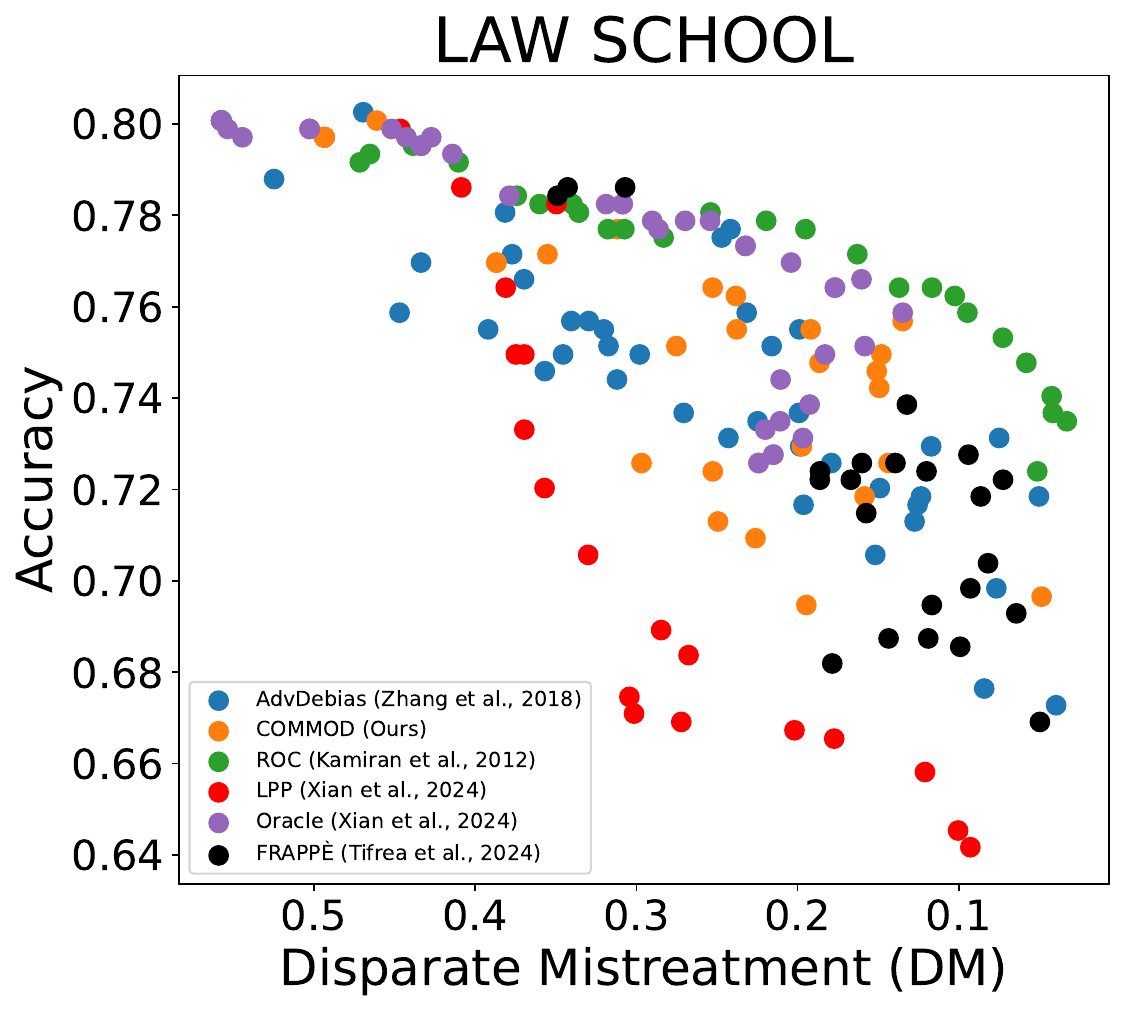}
\includegraphics[width=0.49\linewidth]{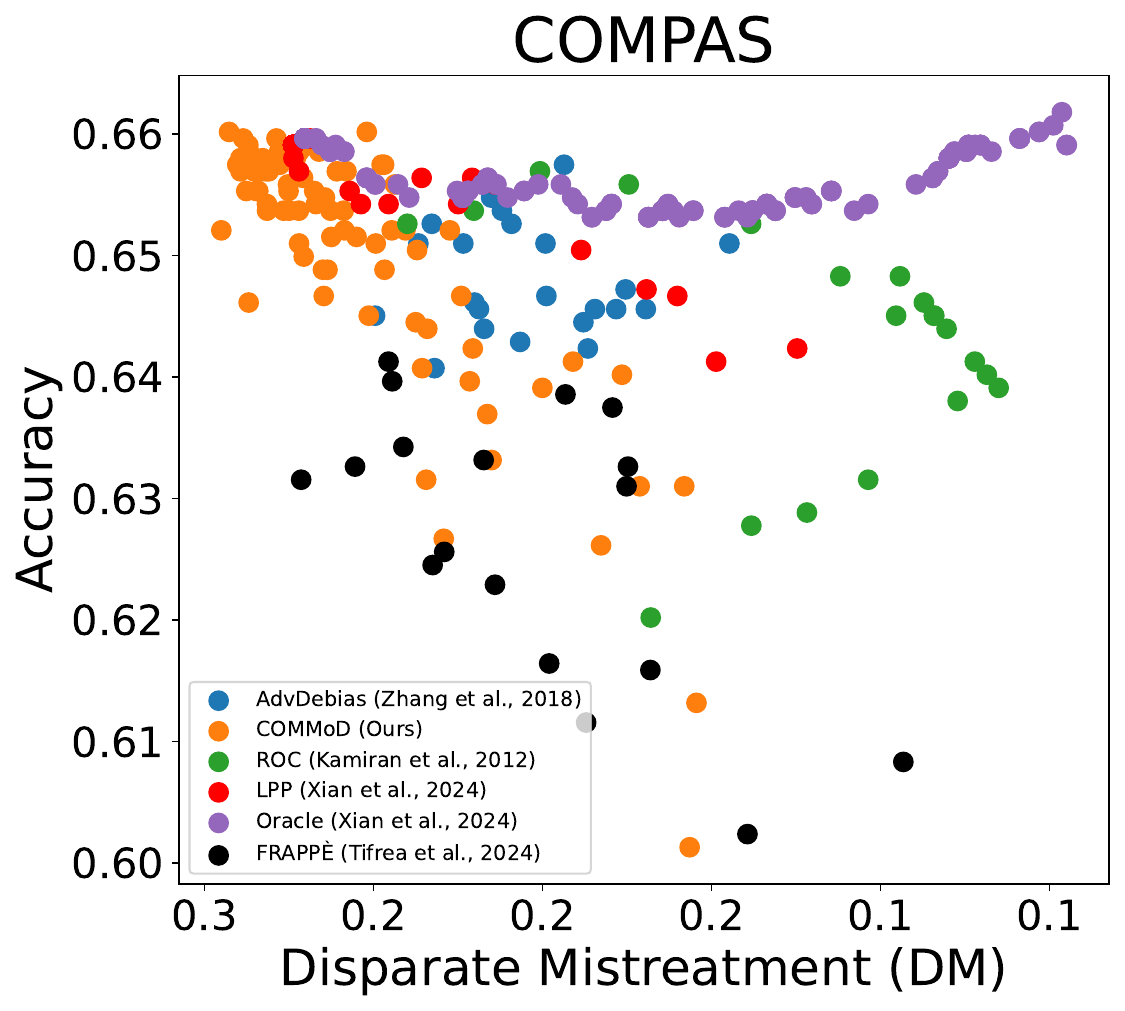}
\caption{Fairness (x-axis) vs Accuracy (y-axis) trade-offs on Law School and Compas datasets for Demographic Parity (top) and Equalized Odds (bottom).}
\label{fig:pareto DPEO}
\end{figure}

As traditionally done for bias mitigation methods, we evaluate the efficacy of COMMOD to preserve accuracy when enforcing fairness through the so-called Pareto plot.
Figure~\ref{fig:pareto DPEO} shows the accuracy and fairness scores achieved by all methods on Law School and Compas datasets (each dot represents one run).
In terms of direct competitors, we observe that COMMOD outperforms LPP (resp. FRAPPE) on the Law School dataset (resp. Compas) and achieves similar results in the Compas dataset (resp. Law School). Further, we show here that even by adding the Minimal Changes and Interpretability constraints in Equation \ref{eq:optimisation-r} COMMOD achieves almost similar results on all datasets as the in-processing method \textit{AdvDebias}. Finally, as expected, \textit{Oracle} (and also by ROC in Compas) generally outperform all methods.
Hence, COMMOD remains a competitive algorithm in terms of fairness and accuracy, regardless of its other benefits, addressed in the next sections.

\subsection{Intuition behind linear self-explainable model.}
\label{sec:appendix self-explainable architecture}
One might question why we chose to introduce explainability into our method through a linear architecture that learns concepts as linear combinations of the input features. Typically, using a linear model instead of a more complex one can lead to a decrease in model performance. However, the rationale behind our choice is clarified through the following experiment, where we evaluate different architectures for the network that learns the multiplicative ratio.
\begin{figure}[h!]
\centering
\includegraphics[width=0.4\columnwidth]{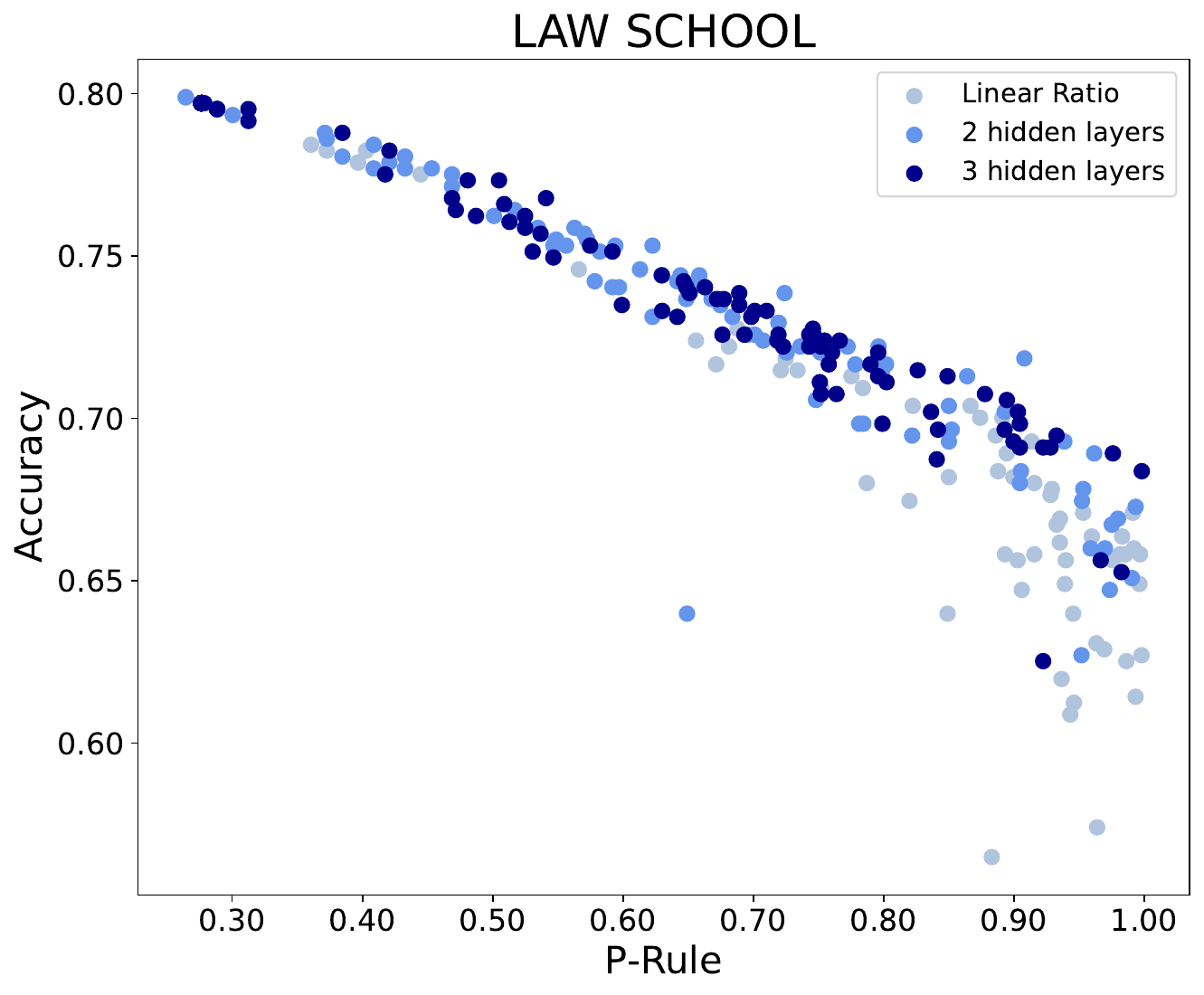}
  \caption{Fairness vs Accuracy trade-off on Law School for different ratio architectures. Each dot corresponds to one model run.}
  \label{fig: pareto architecture}
\end{figure}
From the plot above, we can observe that the linear architecture performs similarly to more complex models. This observation aligns with the findings in \cite{tifrea2023frapp}, where a similar conclusion was drawn regarding their additive term, which is conceptually similar to our multiplicative ratio. Based on these insights, we decided to design the architecture of the ratio as a one-hidden-layer linear network. In this design, the hidden layer acts as a bottleneck that represents the concepts, and the output layer computes the ratio as a linear combination of these concepts.\\
Additionally, maintaining linearity between the concepts and the output ratio allows us to easily determine the direction (positive or negative) in which each concept influences the ratio's value. This design choice enhances both the interpretability and transparency of the model.

\subsection{Benefits of MSE Loss}
\label{sec:appendix-calibration}
In this section, we propose a visualization in support to the MSE loss we chose to keep as similar as possible the edited scores with the ones outputted by the black-box algorithm. To do so, we used a "calibration" plot on edited scores vs black-box scores for our COMMOD algorithm and for \textit{AdvDebias} for similar levels of $(fairness, \ accuracy)$. From such a plot, one can observe that the curve of COMMOD is more stick to the dashed line, representing when probabilities are not changed at all by the editing. On the other hand, we observe that in $AdvDebias$ even if we do not change a prediction label, the probabilities are more scattered in the area of the dashed line. This behavior is beneficial for our motivation. When you want to edit a model for fairness, you usually want to do changes that flip labels (and hence that modifies the values of your fairness definition) and keep the other instances at the same confidence they were before.
\begin{figure}[h]
\centering
\includegraphics[width=0.45\columnwidth]{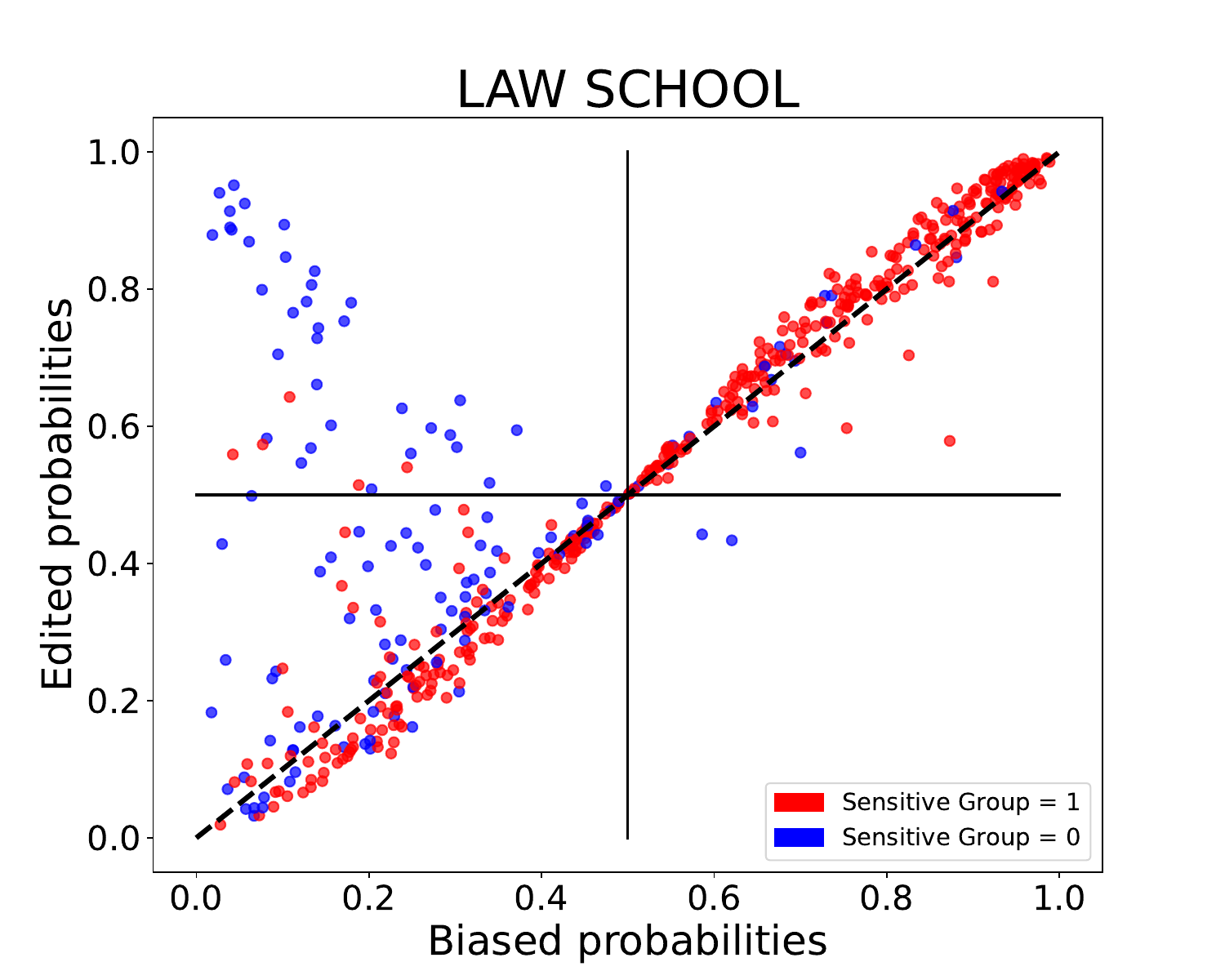}
\includegraphics[width=0.45\columnwidth]{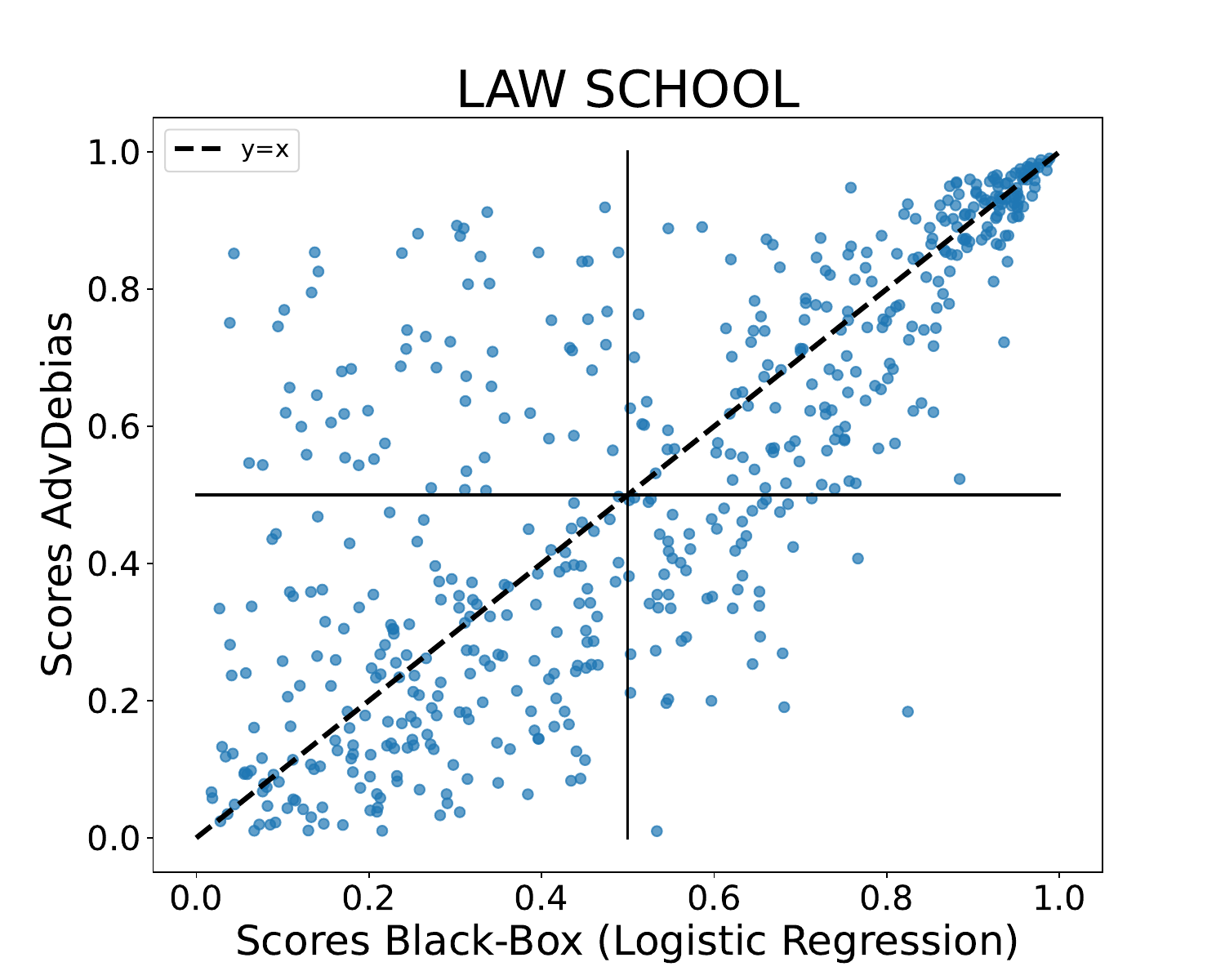}
\caption{Calibration plot (predicted probabilities vs black-box probabilities) on Law School dataset for Demographic Parity.}
\label{fig:calibration commod appendix}
\end{figure}
\newpage

\subsection{Posthoc vs self explainable}
As a final experiment, we aim to demonstrate the need for an approach specifically designed for Controlled Model Debiasing such as COMMOD. For this purpose, we build on the previous experiment and  show that the combination of FRAPPE (which minimizes the number of changes) and a post-hoc interpretability tool leads to less efficient models in terms of debiasing; in other words, that adding interpretability in a post-hoc way comes at a higher cost for fairness. 
We train FRAPPE and COMMOD on the Law School dataset such that their scores in terms of accuracy, fairness and number of changes are comparable. We then train a decision tree to model FRAPPE's additive output, and show in
Figure~\ref{fig:post-hoc vs self expl} the final P-Rule obtained by COMMOD and this built competitor.
We observe that, regardless of the tree's depth, controlling the debiasing by introducing interpretability in a post-hoc way heavily degrades fairness, contrary to COMMOD.

\begin{figure}[h]
    \centering    
    \includegraphics[width=0.5\linewidth]{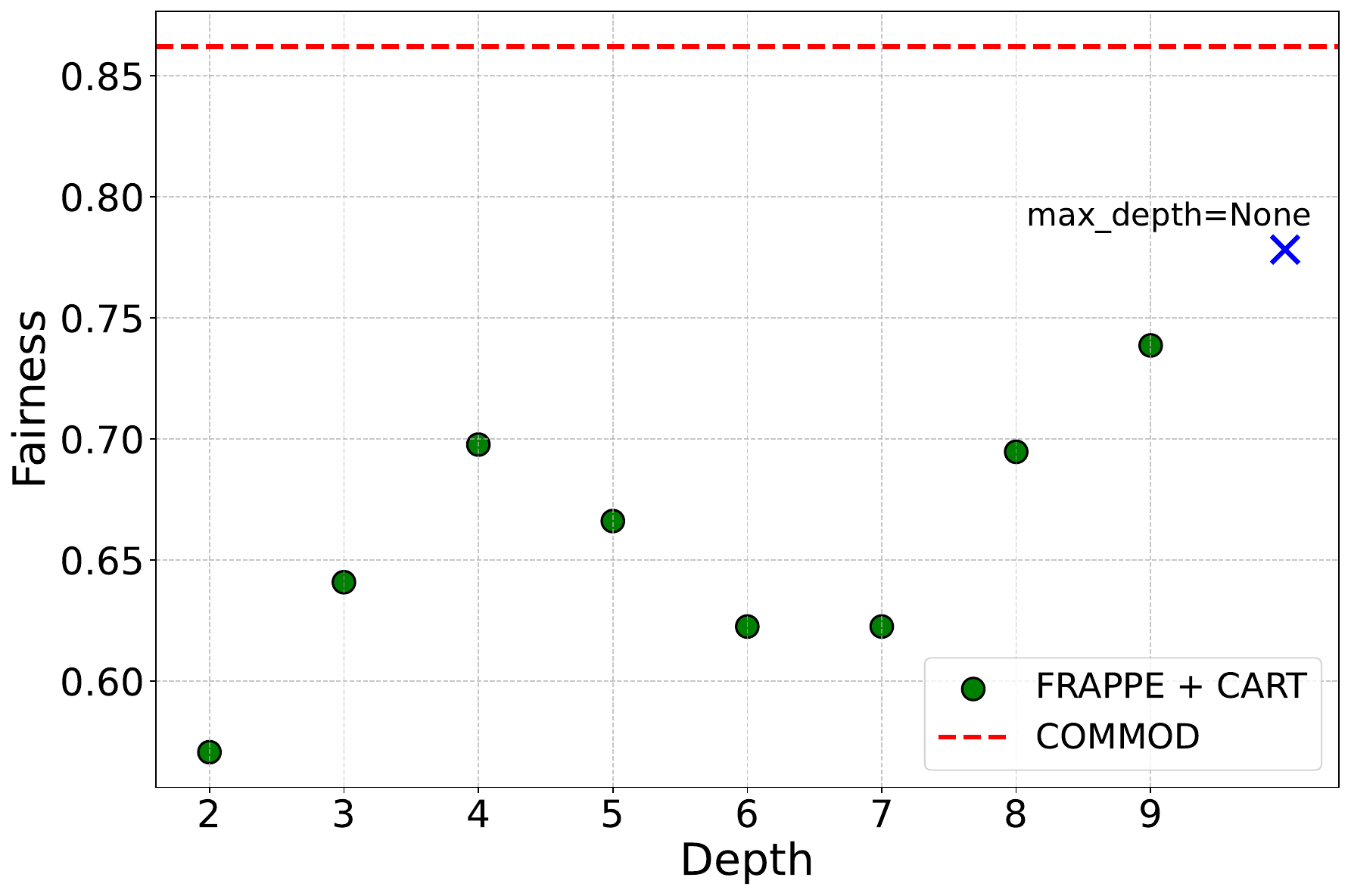}
    \caption{Comparison of fairness levels of FRAPPE+CART (trained on the additive term) with the one of a self-explainable model for different levels of depth }
    \label{fig:post-hoc vs self expl}
\end{figure}




\end{document}